\PassOptionsToPackage{numbers, compress}{natbib}
\documentclass{article}

\newcommand{\treportflag}{1} 

\ifnum\treportflag=1
  \usepackage[preprint]{neurips_2026}
\else
  \usepackage{neurips_2026}
\fi

\usepackage[utf8]{inputenc} 
\usepackage[T1]{fontenc}    
\usepackage{hyperref}       
\usepackage{url}            
\usepackage{booktabs}       
\usepackage{amsfonts}       
\usepackage{nicefrac}       
\usepackage{xcolor}         
\usepackage{microtype}
\usepackage{graphicx}
\usepackage{subcaption}
\usepackage{drago}

\newtheorem*{theorem-restate}{Theorem \ref{thm:cge-bounds}}

\title{Causal Fairness for Survival Analysis}

\author{%
  Drago Ple{\v c}ko \\
  Department of Statistics \& Data Science \\ 
  UCLA
}

\usepackage{selectp}

\begin{document}
\ifnum\treportflag=1
\fi

\maketitle

\begin{abstract}
    In the data-driven era, large-scale datasets are routinely collected and analyzed using machine learning (ML) and artificial intelligence (AI) to inform decisions in high-stakes domains such as healthcare, employment, and criminal justice, raising concerns about the fairness behavior of these systems. Existing works in fair ML cover tasks such as bias detection, fair prediction, and fair decision-making, but largely focus on static settings. At the same time, fairness in temporal contexts, particularly survival/time-to-event (TTE) analysis, remains relatively underexplored, with current approaches to fair survival analysis adopting statistical fairness definitions, which, even with unlimited data, cannot disentangle the causal mechanisms that generate disparities. To address this gap, we develop a causal framework for fairness in TTE analysis, enabling the decomposition of disparities in survival into contributions from direct, indirect, and spurious pathways. This provides a human-understandable explanation of why disparities arise and how they evolve over time. Our non-parametric approach proceeds in four steps: (1) formalizing the necessary assumptions about censoring and lack of confounding using a graphical model; (2) recovering the conditional survival function given covariates; (3) applying the Causal Reduction Theorem to reframe the problem in a form amenable to causal pathway decomposition; (4) estimating the effects efficiently. Finally, our approach is used to analyze the temporal evolution of racial disparities in outcome after admission to an intensive care unit (ICU).
\end{abstract}

\section{Introduction} \label{sec:intro}
Large-scale data collection and analysis is increasingly changing decision-making in a variety of real-world settings. Datasets capturing information from hiring processes, university admissions, law enforcement, credit lending and loan approvals, health care interventions, and other high-stakes domains are now routinely processed using machine learning and artificial intelligence to inform or automate decisions \citep{khandani2010consumer,mahoney2007method,brennan2009evaluating, shaheen2021applications}. 
In this context, society is increasingly concerned about how the nature, scope, and quality of these datasets, combined with algorithmic analysis, may shape the future world in which existing decision processes are automated. 
Prior works highlight that data-driven systems can perpetuate or even amplify inequities between demographic groups, with documented examples in criminal justice \citep{ProPublica}, computer vision \citep{pmlr-v81-buolamwini18a}, and online advertising \citep{Sweeney13,Datta15}, to cite a few. More recently, similar concerns have emerged in generative AI \citep{gallegos2024bias}.
Notably, however, the new data-driven era has also highlighted the fact that inequities are pervasive when decisions are made by humans, and the datasets recording these decisions often encode historical biases. 
Well-known examples in the literature include the gender pay gap \citep{blau1992gender,blau2017gender} and racial bias in criminal sentencing \citep{sweeney1992influence,pager2003mark}. 
As a result, data collected from the current reality will encode these patterns, effectively embedding past discriminatory decisions against certain protected groups. Naturally, predictive or generative models developed on such data are at risk of exhibiting undesirable bias. These observations gave rise to the growing field of fair machine learning, although as described above, many core issues originate before ML systems are even deployed.

Within this context, it is useful to distinguish between different tasks appearing in the growing literature on fair ML. One can identify three specific and distinct tasks: (1) bias detection and quantification for existing outcomes or decision policies; (2) construction of fair predictions of an outcome; and (3) construction of fair decision-making policies intended for real-world implementation \citep{plecko2022causal}. An additional helpful distinction is between static and dynamic fairness settings, which interacts with the above-mentioned tasks. In the former, the goal is to analyze a \textit{snapshot} of reality at a single time point. In the latter, the objective is to understand how disparities between groups emerge and evolve over time. Dynamic, time-resolved settings arise naturally when institutions repeatedly evaluate cohorts, such as in hiring or college admission decisions. In some cases, the same cohort of individuals may be subject to sequential decisions, or the focus may be on the time elapsed before an event occurs, with attention to group differences. This final case, involving time-to-event (TTE) or survival analysis, is the focus of this paper.

While a flurry of recent work focuses on fairness \citep{pessach2022review}, most commonly in static settings concerning fair prediction, some areas in the fair ML landscape remain relatively understudied, particularly those involving temporal data. Existing works on fairness in TTE analysis primarily adopt statistical definitions of fairness \citep{sonabend2022flexible, rahman2022fair, hu2024fairness, xie2025fairness}, which, on their own, cannot distinguish between different causal mechanisms that generate disparity in the real world, even with unlimited data. For this reason, a field studying fairness through a causal lens has emerged \citep{kusner2017counterfactual, zhang2018fairness, chiappa2019path, plecko2022causal}, enabling human-understandable and interpretable definitions and metrics of fairness that are explicitly tied to the causal mechanisms transmitting change between groups. Within the literature on causal fairness, however, few dynamic settings have been investigated, with some notable exceptions \citep{nabi2019learning, creager2020causal}. Thus, within the causal approach, there remains a need for a deeper understanding of how causal reasoning can be applied to fairness in temporal data, such as TTE analysis.

In this work, our aim is to fill this gap, and offer a principled approach for causal fairness analysis in TTE settings. We next describe the motivating example studied in the paper:
\begin{figure}[t]
    \centering
    \begin{subfigure}[b]{0.525\textwidth}
        \centering
        \includegraphics[width=0.95\linewidth]{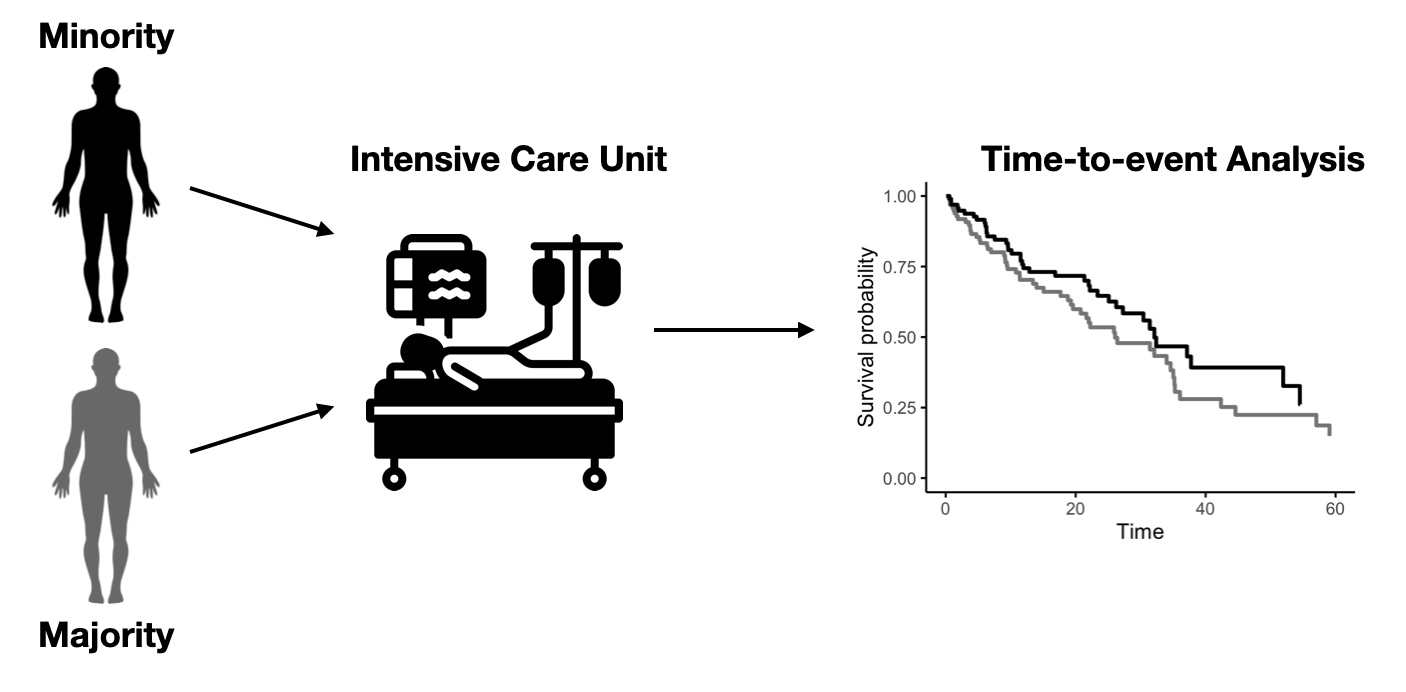}
        \vspace{-0.15in}
        \caption{Survival analysis after ICU admission.}
        \label{fig:intro-ex-vis}
    \end{subfigure}
    \begin{subfigure}[b]{0.465\textwidth}
        \centering
        \scalebox{0.75}{
        \begin{tikzpicture}
	 	[>=stealth, rv/.style={thick}, rvc/.style={triangle, draw, thick, minimum size=8mm}, node distance=7mm]
	 	\pgfsetarrows{latex-latex};
	 	\begin{scope}
	 	\node[rv] (c) at (2.2,1.47) {age};
            \node[rv] (c) at (3,1.5) {sex};
            \node[rv] (c) at (3.8,1.525) {SES};
	 	\node[rv] (a) at (0,0) {race};
	 	\node[rv, align=center] (m) at (2.2,-1.5) {admission\\diagnosis};
	 	\node[rv, align=center] (r) at (3.8,-1.5) {illness\\severity};
	 	\node[rv] (y) at (6.5,1) {time-to-death};
            \node[rv] (y2) at (6.5,-1) {censoring};
            \node[rv] (empty) at (0,-2.2) {};
	 	
	 	\node (Zset) [draw,rectangle,minimum width=2.5cm,minimum height=0.8cm,label={[above right]Confounders $Z$}] at (3,1.5) {};
	 	\node (Wset) [draw,rectangle,minimum width=3.5cm,minimum height=1cm, label={[above left]Mediators $W$}] at (3,-1.5) {};
	 	
	 	\path[->] (a) edge[bend left = 0] (y);
            \path[->] (a) edge[bend left = 0] (y2);
	 	\path[->] (a) edge[bend left = -20] (Wset);
	 	\path[->] (Zset) edge[bend left = 0] (Wset);
	 	\path[->] (Wset) edge[bend left = 0] (y);
            \path[->] (Wset) edge[bend left = 0] (y2);
	 	\path[->] (Zset) edge[bend left = 0] (y);
            \path[->] (Zset) edge[bend left = 0] (y2);
	 	
	 	\path[<->,dashed] (a) edge[bend left = 20](Zset);
	 	\end{scope}
	\end{tikzpicture}
        }
        \vspace{-0.1in}
        \caption{Causal diagram.}
        \label{fig:intro-sfm}
    \end{subfigure}
    \caption{Introductory example of race differentials after intensive care unit (ICU) admission.}
    \label{fig:intro-ex}
\end{figure}
\begin{example}[Time-to-Death Differentials after ICU]
    Patients are admitted to an intensive care unit (ICU) for various life-threatening conditions. Upon admission, patient data is collected, including admission diagnosis and illness severity information ($W$), as well as demographic information such as age and socioeconomic status ($Z$). The outcome of interest is the time to death after ICU admission ($T$), and the study aims to compare groups defined by race ($X$).
    \cref{fig:intro-ex-vis} provides an overview.
    
    In the data, instead of directly observing the true time to death $T$, we observe the patient’s study end time $M$ and a censoring indicator $\delta$. If $\delta = 0$, $M$ corresponds to a censoring time (the patient is removed from the study and no longer observed), while $\delta = 1$ implies that $M$ equals the death time $T$. 
    The exact censoring mechanism depends on the study design and data availability. 

    The causal diagram in \cref{fig:intro-sfm} illustrates that the attribute $X$ may influence the time to death $T$ through multiple causal pathways: direct path ($X \to T$), indirect path ($X \to W \to T$), and spurious (confounded) path ($X \bidir Z \to T$). While a typical static analysis might examine the direct, indirect, and spurious effects of race on in-hospital mortality (or mortality within a fixed time window), our goal is to explain, in a time-dependent fashion, how disparities in survival evolve.
\end{example}
Specifically, the contributions of the paper are the following:
\begin{enumerate}[label=(\roman*)]
    \item We formalize causal fairness in survival analysis and prove the Causal Reduction Theorem (\cref{thm:causal-reduction}), allowing us to identify and decompose group disparities in TTE settings.
    \item We instantiate the framework in three regimes: non-informative censoring (\cref{sec:classic}), competing risks (\cref{sec:cr}), and informative censoring (\cref{sec:ic}). We derive identification results for each of the settings (\cref{prop:id}).
    \item We prove two new technical results: a doubly robust estimator for censored data (\cref{thm:dr-est}), and sharp bounds for copula-graphic estimation under non-disjoint increments (\cref{thm:cge-bounds}).
    \item We analyze race differentials in post-ICU mortality, demonstrating that causal fairness for TTE yields scientific insights missed by standard statistical fairness analyses.
\end{enumerate}

\subsection{Related Work}
Some previous works investigate fairness notions for TTE analysis \citep{sonabend2022flexible, rahman2022fair, hu2024fairness}. These studies focus on statistical definitions of fairness for quantifying disparities, whereas our goal is to develop and apply causal notions of fairness. Our work is also connected to the literature on causal fairness \citep{zhang2018fairness, plecko2022causal} that introduces important causal fairness notions, but is concerned with the static setting only, while we extend these ideas to the context of TTE analysis. 
Another work related to ours is \citep{pham2025boundaries}, which provides a causal-inspired fairness discussion in a TTE setting, although a statistical notion of fairness is adopted therein.
Finally, our work is related to the literature on causal mediation for survival analysis \citep{lange2011direct, vanderweele2011causal}. However, most of this literature adopts parametric models, often specifying particular forms for the survival time or mediator distributions. By contrast, we focus on non-parametric inference, avoiding strong modeling assumptions about the distributional form of survival times or mediators.

\subsection{Preliminaries} \label{sec:prelims}
We use structural causal models (SCMs) as our semantical framework \citep{pearl:2k}. An SCM is
a tuple $\mathcal{M} := \langle V, U, \mathcal{F}, P(u)\rangle$ , where $V$, $U$ are sets of
endogenous (observable) and exogenous (latent) variables 
respectively, $\mathcal{F}$ is a set of functions $f_{V_i}$,
one for each $V_i \in V$, where $V_i \gets f_{V_i}(\pa(V_i), U_{V_i})$ for some $\pa(V_i)\subseteq V$ and
$U_{V_i} \subseteq U$. $P(u)$ is a strictly positive probability measure over $U$. Each SCM $\mathcal{M}$ is associated with a causal diagram $\mathcal{G}$ over the nodes $V$, where $V_i \rightarrow V_j$ if $V_i$ is an argument of $f_{V_j}$, and $V_i \bidir V_j$ if the corresponding $U_{V_i}, U_{V_j}$ are not independent. An instantiation of the exogenous $U = u$ is called a \textit{unit}. By $Y_{x}(u)$ we denote the potential response of $Y$ when setting $X=x$ for the unit $u$, which is the solution for $Y(u)$ to the set of equations obtained by evaluating the unit $u$ in the submodel $\mathcal{M}_x$, in which all equations in $\mathcal{F}$ associated with $X$ are replaced by $X = x$.
\section{Causally Fair Survival -- Non-Informative Censoring} \label{sec:classic}
Our first step is to conceptualize the notions of causal fairness in the context of survival analysis.
We assume access to a specific cluster causal diagram $\mathcal{G}_{\text{SFM}}$ known as the standard fairness model (SFM) \citep{plecko2022causal}, which we adapt to the setting of survival analysis. The SFM (Fig.~\ref{fig:sfm-nic}) consists of the following: \textit{protected attribute}, labeled $X$ (e.g., gender, race, religion), assumed to be binary; the set of \textit{confounding} variables $Z$, which are not causally influenced by the attribute $X$ (e.g., demographic information, zip code), but could share co-variations with $X$; the set of \textit{mediator} variables $W$ that are possibly causally influenced by the attribute. 

Further, the variable $T$ denotes the survival time of the individual for the specific event of interest, while $C$ denotes the time to censoring. These variables are in a shaded gray area, since they are assumed not to be observed. Instead, we observe the pair $(M, \delta)$, where $M = \min\{T, C\}$ and $\delta = \mathbb{1}(T \leq C)$ is the indicator of whether the observed event is death or not. 

The SFM in Fig.~\ref{fig:sfm-nic} encodes some important assumptions for our analysis, which can be broken into two parts. Firstly, the graphical model implies that variables $T, C$ are independent conditional on $X, Z, W$, written as $T \ci C \mid X, Z, W$, since there is no unobserved confounding between $T$ and $C$ (absence of a bidirected arrow $T \bidir C$). This assumption, which stipulates that the censoring mechanism is (conditionally) independent of the survival mechanism, allows the application of the standard tools of survival analysis, such as the (conditional) Kaplan-Meier \cite{kaplan1958nonparametric} or Nelson-Aalen \citep{nelson1972theory, aalen1978nonparametric} estimators. This assumption requires careful justification, and can often be used when the censoring occurs due to administrative reasons such as study termination (violations of this assumption are discussed in \cref{sec:ic}). 
The second part of the assumptions relates to the lack-of-confounding assumptions typically used in the causal inference literature. In particular, note that there are no bidirected edges between sets $\{T, C, M, \delta\}$ and $\{X, Z, W\}$, and also there are no bidirected arrows between $X, W$ or $Z, W$. These lack-of-confounding assumptions (encoded through absences of bidirected edges) are important for identification of different effects of the attribute $X$ on the survival curve. 
The above two sets of assumptions also correspond to the steps of our analysis in the sequel. 

\begin{figure*}[t]
  \centering
  \newcommand{\sfmscale}{0.83}
  \begin{subfigure}{0.25\textwidth}
    \centering
    \scalebox{\sfmscale}{    \begin{tikzpicture}
	 [>=stealth, rv/.style={thick}, rvc/.style={triangle, draw, thick, minimum size=7mm}, node distance=18mm]
	 \pgfsetarrows{latex-latex};
	 \begin{scope}
		\node[rv] (0) at (0,1) {$Z$};
	 	\node[rv] (1) at (-1.25,0) {$X$};
	 	\node[rv] (2) at (0,-1) {$W$};
	 	\node[rv] (3) at (1.25,0.6) {$T$};
            \node[rv] (4) at (1.25,-0.6) {$C$};
            \begin{scope}[on background layer]
              \node[draw, gray, fill=gray!5, fit=(3)(4), inner sep=-0.75pt, rounded corners] (unobs) {};
            \end{scope}
            \node[rv] (5) at (2.25,0) {$(M, \delta)$};
	 	\draw[->] (1) -- (2);
		\draw[->] (0) -- (3);
            \draw[->] (0) -- (4);
	 	\path[->] (1) edge[bend left = 0] (3);
            \path[->] (1) edge[bend left = 0] (4);
		\path[<->] (1) edge[bend left = 30, dashed] (0);
	 	\draw[->] (2) -- (3);
            \draw[->] (2) -- (4);
		\draw[->] (0) -- (2);
            \draw[->] (3) -- (5);
            \draw[->] (4) -- (5);
	 \end{scope}
	 \end{tikzpicture}}
    \caption{Non-Informative Case.}
    \label{fig:sfm-nic}
  \end{subfigure}\hfill
  \begin{subfigure}{0.2\textwidth}
    \centering
    \scalebox{\sfmscale}{\begin{tikzpicture}
	 [>=stealth, rv/.style={thick}, rvc/.style={triangle, draw, thick, minimum size=7mm}, node distance=18mm]
	 \pgfsetarrows{latex-latex};
	 \begin{scope}
		\node[rv] (0) at (0,1) {$Z$};
	 	\node[rv] (1) at (-1.25,0) {$X$};
	 	\node[rv] (2) at (0,-1) {$W$};
	 	\node[rv] (3) at (1.5,0) {$\Phi(t)$};
	 	\draw[->] (1) -- (2);
		\draw[->] (0) -- (3);
	 	\path[->] (1) edge[bend left = 0] (3);
		\path[<->] (1) edge[bend left = 30, dashed] (0);
	 	\draw[->] (2) -- (3);
		\draw[->] (0) -- (2);
	 \end{scope}
	 \end{tikzpicture}}
    \caption{Thm.~\ref{thm:causal-reduction} Reduction.}
    \label{fig:sfm-reduce}
  \end{subfigure}\hfill
  \begin{subfigure}{0.27\textwidth}
    \centering
    \scalebox{\sfmscale}{        \begin{tikzpicture}
	 [>=stealth, rv/.style={thick}, rvc/.style={triangle, draw, thick, minimum size=7mm}, node distance=18mm]
	 \pgfsetarrows{latex-latex};
	 \begin{scope}
		\node[rv] (0) at (0,1) {$Z$};
	 	\node[rv] (1) at (-1.25,0) {$X$};
	 	\node[rv] (2) at (0,-1) {$W$};

	 	\node[rv] (3) at (1.25,0.9) {$T_1$};
        \node[rv] (6) at (1.2,0.5) {$\vdots$};
        \node[rv] (7) at (1.25,-0.2) {$T_K$};
        \node[rv] (4) at (1.25,-0.9) {$C$};

            \begin{scope}[on background layer]
              \node[draw, gray, fill=gray!5, fit=(3)(7)(4), inner sep=-0.75pt, rounded corners] (unobs) {};
            \end{scope}

            \node[rv] (5) at (2.5,-0.2) {$(M, \delta)$};

	 	\draw[->] (1) -- (2);
		\draw[->] (0) -- (2);

		\draw[->] (0) -- (3);
		\draw[->] (2) -- (3);
	 	\path[->] (1) edge[bend left = 0] (3);

		\draw[->] (0) -- (7);
		\draw[->] (2) -- (7);
	 	\path[->] (1) edge[bend left = 0] (7);

            \draw[->] (0) -- (4);
            \draw[->] (2) -- (4);
            \path[->] (1) edge[bend left = 0] (4);

		\path[<->] (1) edge[bend left = 30, dashed] (0);

        \path[<->] (3) edge[bend left = 25, dashed] (7);

            \draw[->] (3) -- (5);
            \draw[->] (7) -- (5);
            \draw[->] (4) -- (5);

	 \end{scope}
	 \end{tikzpicture}}
    \caption{Competing Risks.}
    \label{fig:sfm-cr}
  \end{subfigure}\hfill
  \begin{subfigure}{0.25\textwidth}
    \centering
    \scalebox{\sfmscale}{\begin{tikzpicture}
	 [>=stealth, rv/.style={thick}, rvc/.style={triangle, draw, thick, minimum size=7mm}, node distance=18mm]
	 \pgfsetarrows{latex-latex};
	 \begin{scope}
		\node[rv] (0) at (0,1) {$Z$};
	 	\node[rv] (1) at (-1.25,0) {$X$};
	 	\node[rv] (2) at (0,-1) {$W$};
	 	\node[rv] (3) at (1.25,0.6) {$T$};
            \node[rv] (4) at (1.25,-0.6) {$C$};
            \begin{scope}[on background layer]
              \node[draw, gray, fill=gray!5, fit=(3)(4), inner sep=-0.75pt, rounded corners] (unobs) {};
            \end{scope}
            \node[rv] (5) at (2.25,0) {$(M, \delta)$};

	 	\draw[->] (1) -- (2);
		\draw[->] (0) -- (3);
            \draw[->] (0) -- (4);
	 	\path[->] (1) edge[bend left = 0] (3);
            \path[->] (1) edge[bend left = 0] (4);
		\path[<->] (1) edge[bend left = 30, dashed] (0);
	 	\draw[->] (2) -- (3);
            \draw[->] (2) -- (4);
		\draw[->] (0) -- (2);

        \path[<->] (3) edge[bend left = 20, dashed] (4);

            \draw[->] (3) -- (5);
            \draw[->] (4) -- (5);
	 \end{scope}
	 \end{tikzpicture}}
    \caption{Informative Case.}
    \label{fig:sfm-ic}
  \end{subfigure}
  \caption{Standard Fairness Models for different settings.}
  \label{fig:sfms}
\end{figure*}

Our goal is to explain any observed discrepancies between the survival curves for the demographic groups $X = x_0$ (e.g., male) and $X = x_1$ (e.g., female). In particular, we are interested in explaining survival differences, measured by the so-called total variation measure $\text{TV}_{x_0, x_1}(t)$:
\begin{align} \label{eq:surv-tv}
         P(T > t \mid X = x_1) - P(T > t \mid X = x_0)
\end{align}
across time points $t \in [0, \infty)$. The survival curve $P(T > t)$ will be denoted by $S(t)$, and the conditional by $S(t \mid x)$.
We aim to decompose the measure of disparity in \cref{eq:surv-tv} into the contributions of (a) the direct effect, along the arrow $X \to T$; (b) indirect effect $X \to W \to T$; and (c) spurious/confounded effect $X \bidir Z \to T$. 
For decomposing the measures, we will follow a two-step process, which is closely related to the two types of assumptions encoded in the SFM in Fig.~\ref{fig:sfm-nic}. 
The first step is related to inference of $P(T \mid X, Z, W)$, whereas the second step is related to inference of causal effects of $X$ on functionals of $P(T \mid X, Z, W)$, once the conditional distribution of the survival time $P(T \mid X, Z, W)$ has been established.

\paragraph{Step 1: Inferring $P(T \mid X, Z, W)$} 
The first challenge is related to the inference of the distribution $P(T \mid X, Z, W)$ only from data on $(X, Z, W, M, \delta)$. As mentioned earlier, the core challenge stems from the fact that variables $T, C$ remain unobserved (the variables in shaded area in Fig.~\ref{fig:sfm-nic}. Instead, we have access to the pair $(M, \delta)$, and depending on the censoring mechanism, inference of $P(T \mid X, Z, W)$
may or may not be possible. 
Under the assumption $T \ci C \mid X, Z, W$ encoded in \cref{fig:sfm-nic}, however, the likelihood for $T$ is a function of $(M, \delta)$ alone, and conditional Kaplan-Meier (for survival) and Nelson-Aalen (for cumulative hazard) are valid. 

\paragraph{Step 2: Inference of Causal Effects on $\phi$.}
We next assume $P(T \mid X, Z, W)$ is known, and define $\Phi \overset{\Delta}{=} \phi(P(T \mid X, Z, W))$, where $\Phi$ depends on the random values of $X, Z, W$, whereas $\phi$ is a mapping from the space of conditional distributions of $T$ into $\mathbbm{R}$. \cref{eq:surv-tv} can thus be written as
\begin{align} \label{eq:phi-diff}
    \ex[\Phi \mid X&=x_1] - \ex[\Phi \mid X=x_0],
\end{align}
where $\Phi(t) = P(T > t \mid X, Z, W)$. 
This observation, that $\Phi(t)$ is obtained by applying a known transformation $\phi$ to an identifiable distribution $P(T \mid X, Z, W)$, allows us to construct another causal model, as described in the following result (all proofs are in \cref{appendix:proofs}):
\begin{theorem}[Reduced Standard Fairness Model]
    \label{thm:causal-reduction}
    Let $\mathcal{M}$ be an SCM compatible with the SFM in Fig.~\ref{fig:sfm-ic}. Let $\phi$ be a functional mapping $P(T \mid X, Z, W)$ to $\mathbbm{R}$. The structural mechanism of the random variable $\Phi$ is then given by:
    \begin{align}
        f_{\Phi}(x, z, w) = \phi(P(T \mid X=x, Z= z, W= w)). 
    \end{align}
    Therefore, the variable $\Phi$ is a deterministic function of $X, Z, W$, not depending on any of the noise variables $U$, and we can add $\Phi$ to the causal diagram as in \cref{fig:sfm-reduce}.
\end{theorem}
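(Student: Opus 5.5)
The plan is to prove the statement by expressing the conditional law $P(T \mid X=x, Z=z, W=w)$ directly in terms of the structural mechanism of $T$ and the distribution of its exogenous noise. Write $T \gets f_T(x, z, w, U_T)$, where $U_T$ is the exogenous variable attached to $T$. The first step is to check, from the diagram in Fig.~\ref{fig:sfm-ic}, that $U_T$ is independent of the exogenous variables of $X$, $Z$ and $W$. There are no bidirected edges between $T$ and any of $X, Z, W$; the only bidirected edge touching $T$ is $T \bidir C$, and $C$ is not an ancestor of $X, Z, W$. It follows that $U_T \ci (X, Z, W)$, because $(X, Z, W)$ is a deterministic function of $(U_X, U_Z, U_W)$, and by the SCM definition in Section~\ref{sec:prelims} these are independent of $U_T$.

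The second step uses this independence to obtain, for any measurable set $A$,
\[
P(T \in A \mid X=x, Z=z, W=w) = P\bigl(f_T(x,z,w,U_T) \in A\bigr).
\]
The right-hand side is a fixed function of $(x,z,w)$ determined by $f_T$ and $P(u_T)$ alone. One subtlety is that conditioning on $(X,Z,W)=(x,z,w)$ requires positivity or a regular version of the conditional distribution. Since $P(u)$ is strictly positive, I would define the conditional through the right-hand side for every $(x,z,w)$, which also fixes a canonical version. Call the resulting kernel $\kappa(x,z,w) := P_{U_T}\circ f_T(x,z,w,\cdot)^{-1}$.

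The third step is to define $\Phi := \phi(\kappa(X,Z,W))$, so that $f_\Phi(x,z,w) = \phi(\kappa(x,z,w)) = \phi(P(T \mid X=x, Z=z, W=w))$. This $f_\Phi$ takes only $x, z, w$ as arguments and uses no exogenous input. Adding $\Phi$ to $\mathcal{M}$ as a new endogenous variable with this mechanism gives a valid SCM. Its diagram has the edges $X\to\Phi$, $Z\to\Phi$ and $W\to\Phi$, and no bidirected edges into $\Phi$, because $\Phi$ has no noise term. Marginalizing $T, C, M, \delta$ leaves the diagram in Fig.~\ref{fig:sfm-reduce}, since none of these variables is an ancestor of $X, Z, W$ or $\Phi$. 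In addition, under any intervention on $X$ (or on $X, Z, W$), the counterfactual $\Phi_x(u) = f_\Phi(x, Z(u), W_x(u))$ follows the same mechanism. This property is what later justifies treating $\Phi$ as an outcome node.

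The main obstacle is the first step: confirming that the independence $U_T \ci (X,Z,W)$ holds even though $T \bidir C$ is present. I would argue this by noting that the bidirected edge only correlates $U_T$ with $U_C$, and $C$ is a non-ancestor of $\{X, Z, W\}$. Two further points need care. The dependence of $\phi$ on $P(T\mid\cdot)$ must be read as dependence on the kernel evaluated at $(x,z,w)$, not on the realized unit. Measurability of $(x,z,w)\mapsto\phi(\kappa(x,z,w))$ would be assumed.
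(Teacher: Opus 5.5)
Your proposal is correct and takes the same route as the paper's proof. Both express $P(T \mid X=x, Z=z, W=w)$ through $f_T$ and the law of the exogenous noise, conclude that $f_\Phi(x,z,w)=\phi(P(T\mid X=x,Z=z,W=w))$ takes no exogenous input, and attach $\Phi$ as a child of $X$, $Z$, $W$. Your version is more explicit than the paper's one-line argument: it spells out why the noise of $T$ is independent of $(X,Z,W)$ despite the bidirected edge between $T$ and $C$ (this uses the standard convention that exogenous terms in distinct c-components are jointly, not just pairwise, independent), fixes a version of the conditional law for every $(x,z,w)$, and justifies Fig.~\ref{fig:sfm-reduce} by marginalizing the non-ancestors $T, C, M, \delta$.
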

The above theorem is quite helpful as it allows us to establish the causal diagram for the new variable $\Phi(t)$. 
Then, we consider the notions of direct, indirect, and spurious effects: $x\text{-DE}_{x_0, x_1}(\Phi\mid x) = \ex[\Phi_{x_1, W_{x_0}} \mid x] - \ex[\Phi_{x_0}\mid x]$, $x\text{-IE}_{x_1, x_0}(\Phi\mid x) = \ex[\Phi_{x_1, W_{x_0}}\mid x] - \ex[\Phi_{x_1} \mid x]$, and $x\text{-SE}_{x_1, x_0}(\Phi) = \ex[\Phi_{x_1} \mid x_0] - \ex[\Phi_{x_1} \mid x_1]$. Based on these, the TV$_{x_0, x_1}(\Phi)$ measure $\ex[\Phi \mid X = x_1] - \ex[\Phi \mid X=x_0]$ can be decomposed as \citep{zhang2018fairness, plecko2022causal}:
    \begin{align} \label{eq:tv-decomp}
        \text{TV}_{x_0, x_1}(\Phi) &=  x\text{-DE}_{x_0, x_1}(\Phi\mid x_0) 
        - {x\text{-IE}_{x_1, x_0}(\Phi\mid x_0)}
        -x\text{-SE}_{x_1, x_0}(\Phi).
    \end{align}
The $x$-DE measures the contribution of an $x_0 \to x_1$ transition along the direct $X \to \Phi$ path; the $x$-IE and $x$-SE measure the reverse $x_1 \to x_0$ transition along the indirect $X \to W \to \Phi$ and spurious $X \bidir Z \to \Phi$ paths, respectively \citep{plecko2022causal}. From these, \cref{eq:tv-decomp} decomposes the marginal $\Phi$-disparity into causal pathway contributions, while \cref{thm:causal-reduction} allows us to do this at each time point. Furthermore, under the SFM in \cref{fig:sfm-reduce}, these effects are identifiable from observational data:
\begin{proposition}[Identification of Causal Effects] \label{prop:id}
    Let $\mathcal{M}$ be an SCM compatible with the Standard Fairness Model, and let $P(V)$ be its observational distribution. The potential outcome $\ex [\Phi_{x_y, W_{x_w}} (t)\mid X = x_z]$ can be identified as:
    \begin{align}
         \sum_{z,w} \ex[\Phi(t) \mid x_y, z, w]&P(W = w \mid X = x_w, Z = z) P(Z = z \mid X = x_z).
    \end{align}
\end{proposition}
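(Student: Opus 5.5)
The plan is to work entirely inside the reduced model of \cref{thm:causal-reduction} (Fig.~\ref{fig:sfm-reduce}). There $\Phi(t) = f_\Phi(X,Z,W)$ is a deterministic function of its parents with no exogenous input of its own. The nested counterfactual therefore becomes $\Phi_{x_y, W_{x_w}}(t) = f_\Phi(x_y, Z, W_{x_w})$, because $Z$ is a non-descendant of $X$ and so $Z_{x} = Z$. The task thus reduces to computing $\ex[f_\Phi(x_y, Z, W_{x_w}) \mid X = x_z]$.

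The first step is to condition on $Z$. By the law of total expectation,
\begin{align*}
\ex[f_\Phi(x_y, Z, W_{x_w}) \mid x_z] = \sum_z \ex[f_\Phi(x_y, z, W_{x_w}) \mid x_z, z]\, P(z \mid x_z).
\end{align*}
The second step is to expand over $w$, which gives the inner term as $\sum_w f_\Phi(x_y,z,w)\, P(W_{x_w} = w \mid x_z, z)$.

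The third step identifies $P(W_{x_w} = w \mid x_z, z)$ as $P(w \mid x_w, z)$. In the SFM, $U_W$ is independent of $U_X$ and $U_Z$, since there are no bidirected edges between $W$ and $X$ or between $W$ and $Z$. Hence $W_{x_w}(u) = f_W(x_w, z, u_W)$ given $Z = z$ depends only on $U_W$, which is independent of $(X,Z)$. This yields $W_{x_w} \ci X \mid Z$, so $P(W_{x_w}=w \mid x_z, z) = P(W_{x_w} = w \mid z) = P(W_{x_w}=w \mid x_w, z)$. Consistency then turns the last expression into $P(w\mid x_w,z)$.

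Finally, I will rewrite $f_\Phi(x_y,z,w)$ as $\ex[\Phi(t) \mid x_y, z, w]$. This holds trivially because $\Phi$ is a deterministic function of $(X,Z,W)$ (indeed it equals $P(T>t\mid x_y,z,w)$). Combining the three pieces gives the claimed formula.

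The main obstacle is the cross-world step: conditioning on $X = x_z$ while $W$ is intervened at $x_w \neq x_z$ and $\Phi$ at $x_y$. What makes this go through is that $\Phi$ carries no independent noise, so no cross-world independence between $\Phi$'s noise and $U_W$ is needed; the only independence required is between $U_W$ and $(U_X,U_Z)$. I would state this explicitly, and note positivity, i.e., $P(x_w, z) > 0$ whenever $P(z\mid x_z) > 0$, so that the conditionals are well defined.
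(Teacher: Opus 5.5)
Your proof is correct, but it follows a different route from the paper.

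\textbf{The paper's route.} The paper argues on the counterfactual graph of the SFM (Fig.~\ref{fig:ctf-sfm}). It un-nests $\Phi_{x_y,W_{x_w}}$ as $\sum_w \Phi_{x_y,w}\mathbbm{1}(W_{x_w}=w)$ and applies total probability over $z$. It then uses a chain of independences read off the graph, namely $\Phi_{x_y,w}\ci W_{x_w}\mid Z,X$, $W_{x_w}\ci X\mid Z$, $\Phi_{x_y,w}\ci W\mid Z,X$ and $\Phi_{x_y,w}\ci X\mid Z,W$, interleaved with two consistency steps.

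\textbf{Your route.} You instead exploit the determinism supplied by \cref{thm:causal-reduction}. Since $\Phi_{x_y,w}=f_\Phi(x_y,Z,w)$ is a function of $Z$ alone once $x_y$ and $w$ are fixed, every independence involving $\Phi_{x_y,w}$ is trivial after conditioning on $Z=z$. The only substantive ingredient left is $W_{x_w}\ci X\mid Z$, which you derive directly from $U_W\ci(U_X,U_Z)$.

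\textbf{What each buys.} Your version is more elementary. It makes explicit exactly which exogenous independence carries the result, and that no cross-world assumption about $\Phi$ is needed. The paper's version never uses the fact that $\Phi$ is noise-free. It therefore applies verbatim to an outcome node with its own exogenous term independent of $(U_X,U_Z,U_W)$; it is the standard SFM identification argument reused for $\Phi$.

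\textbf{One small completion.} Your last step replaces $f_\Phi(x_y,z,w)$ by $\ex[\Phi(t)\mid x_y,z,w]$. This also requires $P(x_y,z,w)>0$ on the support of $P(w\mid x_w,z)P(z\mid x_z)$, in addition to the positivity condition you already state.
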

\cref{prop:id} gives us a way to identify a generic potential outcome $\ex [\Phi_{x_y, W_{x_w}} (t)\mid X = x_z]$ of the survival function in the counterfactual world where $X=x_y$ along the direct, $X=x_w$ along the indirect, and $X = x_z$ along the spurious path. For estimating the causal effects on RHS of \cref{eq:tv-decomp}, different choices of $x_y, x_w, x_z \in \{0, 1\}$ need to be used as appropriate. 
The above identification results allows us to identify key causal fairness quantities in a TTE setting with non-informative censoring. We next discuss how to estimate the target quantities in practice using either (1) model-based estimation or (2) doubly-robust estimation.

\paragraph{Model-Based Estimation.}
Given our interest in mixed data types with varying dimension, and non-parametric inference, we use random survival forests (RSF) \citep{ishwaran2008random} for estimating $P(T \mid X, Z, W)$. RSF estimates the conditional cumulative hazard function $\Lambda(t \mid X, Z, W)$, by averaging the Nelson-Aalen estimator of the CHF across the leaf nodes of different trees. From $\Lambda(t \mid X, Z, W)$, the survival function is easily obtained via $S(t \mid X, Z, W) = \exp{(-\Lambda(t \mid X, Z, W))}$, allowing effect estimation:
\begin{proposition}[Model-Based Estimation of Causal Effects] \label{prop:model-est}
    Denote by $f(x, z, w)$ the estimator of $\ex[\Phi \mid x, z, w]$, and by $\hat P(x \mid v')$ the estimator of the probability $P(x \mid v')$ for different choices of $v'$. The potential outcome $\ex [\Phi_{x_y, W_{x_w}} (t)\mid X = x_z]$ can be estimated as:
    \begin{align} \label{eq:x-de-est}
        \frac{1}{n} \sum_{i=1}^n f(x_y, z_i, w_i) \frac{\hat P(x_w \mid z_i, w_i)}{\hat P(x_w \mid z_i)} \frac{\hat P(x_z \mid z_i)}{\hat P(x_z)}.
    \end{align}
\end{proposition}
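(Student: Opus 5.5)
The plan is to show that the estimator in \cref{eq:x-de-est} is a plug-in, importance-reweighted version of the identification formula in \cref{prop:id}, so that it is consistent whenever $f$ and the propensity estimators $\hat P$ are. First, rewrite the target of \cref{prop:id} as an expectation over the observational joint distribution $P(z,w)$. Using Bayes' rule,
\begin{align*}
P(w \mid x_w, z) = \frac{P(x_w \mid z, w)\, P(w \mid z)}{P(x_w \mid z)}, \qquad P(z \mid x_z) = \frac{P(x_z \mid z)\, P(z)}{P(x_z)}.
\end{align*}
Substituting both into the sum and combining $P(w\mid z)P(z) = P(z,w)$ gives
\begin{align*}
\ex[\Phi_{x_y, W_{x_w}}(t) \mid x_z] = \sum_{z,w} \ex[\Phi(t)\mid x_y,z,w]\, \frac{P(x_w\mid z,w)}{P(x_w\mid z)}\, \frac{P(x_z\mid z)}{P(x_z)}\, P(z,w),
\end{align*}
that is, $\ex_{(Z,W)}\big[g(Z,W)\big]$ with $g$ the product of the regression and the two ratios.

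Second, replace each population quantity by its estimator: $\ex[\Phi(t)\mid x_y,z,w]$ by $f(x_y,z,w)$, which is valid because \cref{thm:causal-reduction} makes $\Phi(t)$ a deterministic function of $(X,Z,W)$ and hence the quantity is just $\phi$ evaluated at the RSF estimate of $P(T\mid x_y,z,w)$; the propensities by $\hat P$; and the expectation over $P(z,w)$ by the empirical average over the sample $(z_i,w_i)_{i=1}^n$. This yields exactly \cref{eq:x-de-est}.

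Third, justify consistency. Decompose the error into two terms. The first is the empirical average of the true $g$ minus its mean, which vanishes by the law of large numbers provided $g$ is integrable. The second is the empirical average of $\hat g - g$, which is bounded using uniform convergence of $f$ and $\hat P$ together with a positivity assumption, namely that $P(x\mid z)$, $P(x\mid z,w)$ and $P(x_z)$ are bounded away from zero.

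I expect the main obstacle to be this last step. The ratios are sensitive to small estimated propensities, so positivity has to be assumed, or the estimators truncated, to control $\hat g - g$. If $f$ and $\hat P$ are fit on the same sample used for averaging, some care (or sample splitting) is also needed. Since the proposition is stated as an estimation recipe, I would present the Bayes-rule derivation as the core argument and the consistency step under these standard conditions.
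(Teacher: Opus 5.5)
Your proposal is correct and matches the paper's proof: it uses the same Bayes-rule rewriting of the identification formula from \cref{prop:id} as a $P(z,w)$-weighted sum, and the same observation that plugging in the empirical $\hat P(z,w)$ gives exactly the sample average in \cref{eq:x-de-est}. The only difference is the consistency step: the paper assumes pointwise consistency of each estimator on the (implicitly finite) $(z,w)$ cells and applies the continuous mapping theorem, whereas your LLN-plus-uniform-convergence argument states the positivity requirement explicitly and also covers non-discrete covariates.
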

\cref{prop:model-est} provides us with a basis for model-based estimation of causal effects (as opposed to doubly-robust estimation discussed in the sequel).
As mentioned earlier, the $\Phi$ variable can be thought of as time-dependent. In our setting, where $X,Z,W$ that do not vary over time, the estimated propensities $\hat P(x \mid v')$ also do not depend on time, implying that the time component of the causal decomposition is obtained in a computationally inexpensive way.

\paragraph{Doubly-Robust Estimation.} Our second approach is based on doubly-estimation of causal effects \citep{bang2005doubly}. We prove a new technical result for doubly-robust estimation of counterfactual effects based on censored data, by using the appropriate influence function for debiasing the model-based estimator:
\begin{theorem}[$\Phi(t)$ Potential Outcome Influence Function] \label{thm:dr-est}
    Let $\psi$ denote the potential outcome $\ex [\Phi_{x_y, W_{x_w}} (t)\mid X = x_z]$. Then, the influence function for $\psi$, written $\ifc{(\psi)}$, is given by:
    \begin{align}
        &\frac{\mathbbm{1}(X=x_y)\lambda_{x_z, x_w}(Z)}{P(x_z)\lambda_{x_y, x_w }( Z, W)} \left[\frac{\mathbbm{1}(M > t)}{G(t \mid X, Z, W)} + \xi_1(t) - \xi_2(t) - S(x_y, Z, W)\right] \nonumber \\
        &+ \frac{\mathbbm{1}(X=x_w)}{P(x_z)\lambda_{x_w, x_z}(Z)} \big[S(x_y, Z, W) - \nu_{x_y, x_w}(Z) \big] + \frac{\mathbbm{1}(X=x_z)}{P(x_z)} \big[ \nu_{x_y, x_w}(Z) - \psi \big],
    \end{align}
    where $G(t \mid X, Z, W) = P(C > t \mid X, Z, W)$, $S(x, Z, W) = S(t \mid x, Z, W)$, $\lambda_{x_a, x_b}(V) = \frac{P(x_a \mid V)}{P(x_b \mid V)}$, $\nu_{x_a, x_b}(Z) = \ex [S(x_a, Z, W) \mid x_b, Z]$, $\xi_1(t) = \frac{\mathbbm{1}(M \le t, \delta=0)S(t \mid X,Z,W)}{S(M \mid X,Z,W)G(M \mid X,Z,W)}$, and $\xi_2(t) = S(t \mid X,Z,W) \int_0^t \frac{\mathbbm{1}(M \ge u) h_C(u \mid X,Z,W)}{S(u \mid X,Z,W)G(u \mid X,Z,W)} du$, where $h_C$ is the conditional censoring hazard.
\end{theorem}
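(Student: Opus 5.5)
The plan is to write $\psi$ as a composition of two pieces and apply the chain rule for pathwise derivatives. By \cref{prop:id}, $\psi$ is an outer functional of the full-data regression $\mu(x,z,w) := \ex[\Phi(t)\mid x,z,w] = S(t\mid x,z,w)$:
\begin{align*}
\psi = \frac{1}{P(x_z)}\ex\big[\mathbbm{1}(X=x_z)\,\nu_{x_y,x_w}(Z)\big], \qquad \nu_{x_y,x_w}(Z) = \ex[S(x_y,Z,W)\mid x_w,Z].
\end{align*}
The regression $S(t\mid x,z,w)$ is in turn an inner functional of the observed-data law of $(M,\delta)$ given $(X,Z,W)$, which is identified under $T\ci C\mid X,Z,W$.

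The influence function then splits into two pieces. The first piece is the efficient influence function of the outer mediation functional when $S$ is treated as an observable outcome. The second piece is the influence function of $S(t\mid x,z,w)$ itself, weighted by the Riesz representer of $\mu \mapsto \psi$.

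For the outer step, I would use the standard mediation-formula efficient influence function, in the form of \citep{zhang2018fairness, plecko2022causal} and the Tchetgen Tchetgen--Shpitser multiply robust construction. Perturbing $P(z\mid x_z)$, $P(w\mid x_w,z)$ and $\mu(x_y,z,w)$ in turn produces three terms:
\begin{itemize}
\item $\frac{\mathbbm{1}(X=x_z)}{P(x_z)}[\nu_{x_y,x_w}(Z)-\psi]$ from the confounder distribution;
\item $\frac{\mathbbm{1}(X=x_w)}{P(x_z)}\frac{P(x_z\mid Z)}{P(x_w\mid Z)}[S(x_y,Z,W)-\nu_{x_y,x_w}(Z)]$ from the mediator distribution;
\item $\frac{\mathbbm{1}(X=x_y)}{P(x_z)}\frac{P(x_z\mid Z)P(x_w\mid Z,W)}{P(x_w\mid Z)P(x_y\mid Z,W)}[\text{outcome residual}]$ from the outcome regression.
\end{itemize}
Bayes' rule converts $P(w\mid x_w,z)/P(w\mid x_y,z)$ into the propensity ratios, and this gives exactly the weights $\lambda_{x_z,x_w}(Z)/\lambda_{x_y,x_w}(Z,W)$ and $1/\lambda_{x_w,x_z}(Z)$ in the statement.

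The only change from the uncensored case is the outcome residual. There $\Phi$ is not observed, so "$\Phi - S(x_y,Z,W)$" must be replaced by the influence function of the conditional survival probability $S(t\mid X,Z,W)$ computed from censored data. The chain rule justifies this: the Riesz weight multiplies the conditional influence function of the inner functional. I would check this by perturbing along a submodel of $P(M,\delta\mid X,Z,W)$ only.

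The main obstacle is deriving the conditional influence function of $S(t\mid X,Z,W)$ under coarsening at random. I would take the augmented IPCW route of Robins--Rotnitzky / Tsiatis. The IPCW term $\mathbbm{1}(M>t)/G(t\mid X,Z,W)$ is unbiased for $S(t\mid X,Z,W)$ because $P(C>t\mid T>t, X,Z,W)=G(t\mid\cdot)$. One then projects onto the censoring tangent space by adding
\begin{align*}
\int_0^t \frac{\ex[\mathbbm{1}(T>t)\mid T>u,X,Z,W]}{G(u\mid\cdot)}\, dM_C(u),
\end{align*}
with censoring martingale $dM_C(u)=dN_C(u)-\mathbbm{1}(M\ge u)h_C(u\mid\cdot)\,du$ and $\ex[\mathbbm{1}(T>t)\mid T>u]=S(t)/S(u)$. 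Splitting the martingale integral produces both correction terms:
\begin{itemize}
\item the jump part gives $\xi_1(t)$, evaluated at $u=M$ when $\delta=0$ and $M\le t$;
\item the compensator part gives $-\xi_2(t)$.
\end{itemize}
Subtracting $S(x_y,Z,W)$ then yields the bracket. I would confirm that the bracket has conditional mean zero given $(X,Z,W)$; this follows from the martingale property, which guarantees the earlier terms are unaffected. I would also confirm that it is the efficient element, which holds because the censoring tangent space is orthogonal and the AIPCW form is the known efficient influence function of $S(t\mid\cdot)$ (cf. Bai, Tsiatis and O'Brien). Summing the three pieces gives the stated influence function.
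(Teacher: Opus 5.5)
Your proposal is correct and follows essentially the same route as the paper, which also applies the product rule to the identification formula $\sum_{z,w} S(t\mid x_y,z,w)\,P(w\mid x_w,z)\,P(z\mid x_z)$ and obtains the same three terms with the same Bayes-rule propensity weights. The only difference is that the paper simply states the augmented-IPCW influence function of $S(t\mid x_y,z,w)$ (and afterwards checks its double robustness in $S$ versus $G$), whereas you derive $\xi_1(t)-\xi_2(t)$ as the jump and compensator parts of the censoring-martingale projection, which correctly justifies the step the paper asserts.
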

The proof of the theorem, together with a further discussion on the details of doubly robust estimation, is provided in \cref{appendix:dr}. We estimate $\psi$ using cross-fitting \citep{chernozhukov2018double}: nuisance functions $G, S$ and propensities are fitted on one fold (via xgboost \citep{chen2016xgboost}), while the IF is evaluated on the remaining data, and folds are swapped in turn to obtain the final one step debiased estimate.

\subsection{Causal Fairness under Competing Risks}
\label{sec:cr}
While the Cond-NIC setting considered so far is the most commonly studied scenario in TTE analysis \citep{klein1997survival}, a wide range of real-world applications require moving beyond this
basic case. In particular, individuals may often be simultaneously at risk of multiple, distinct event types, all of which are of interest to us. This leads to the setting of \emph{competing risks} (CR).

Formally, in the CR setting (see \cref{fig:sfm-cr}), each individual is at risk of $K$ event types, indexed by $J \in \{1,\dots,K\}$. Let $T_k$ denote the unobserved time to event of type $k$, and let $C$ denote the time to censoring. Available to us are only
\begin{align}
    M = \min\{T_1,\dots,T_K,C\}, \quad 
\delta \in \{0,1,\dots,K\},
\end{align}
where $\delta = k$ indicates that event $k$ occurred at time $M$, and
$\delta=0$ indicates censoring. As in \cref{sec:classic}, censoring is assumed to be conditionally independent of the event process, $C \ci (T_1,\dots,T_K)\mid X,Z,W.$
A key feature of the competing risks setting is that all event types are of interest, rather than some being treated as nuisances. While it is conceptually appealing to model the full joint distribution
$P(T_1,\dots,T_K \mid X,Z,W)$, this is generally infeasible from observed data, since only the minimum event time is observed, and the competing events may have unobserved common causes (bidirected edges among $T_1, \dots, T_K$ in \cref{fig:sfm-cr}). Therefore, identification of the joint distribution would require strong additional assumptions, such as independence
of failure times, which rarely holds. As a result, most approaches to competing risks focus on functionals of the observable process, most commonly cause-specific hazards \citep{fine1999proportional} or cumulative incidence functions (CIFs) \citep{klein1997survival}. In this work, we focus on the latter.
For cause $k$, the cumulative incidence function is defined as
\begin{align} \label{eq:cif-def}
\mathrm{CIF}_k(t \mid X,Z,W) = P(T_k \le t, T_k \leq \min_j T_j \mid X,Z,W).
\end{align}
The CIF is simply the proportion of individuals who experience event $k$ by time $t$. Importantly, CIFs explicitly account for competition between event types: increasing the rate of one event type necessarily reduces the CIFs of the remaining events. Indeed, letting
\begin{align}
    S_{\mathrm{all}}(t) = P(\min\{T_1,\dots,T_K\} > t)
\end{align}
denote the all-cause (event-free) survival function, we have
    $\sum_{k=1}^K \mathrm{CIF}_k(t) = 1 - S_{\mathrm{all}}(t)$.
This behavior differs from quantities such as $P(T_k > t \mid X,Z,W)$, which do not reflect the presence of competing events. 
These nuances need to be taken into account when interpreting causal fairness decompositions in the CR setting, where CIFs are treated as the outcome of interest. Specifically, for each cause $k$ and time point $t$, we define
\begin{align}
    \Phi_k(t) \overset{\Delta}{=} \mathrm{CIF}_k(t).
\end{align}
By \cref{thm:causal-reduction}, each $\Phi_k(t)$ is a deterministic functional of $P(T_1,\dots,T_K \mid X,Z,W)$, and hence a deterministic function of $(X,Z,W)$.
This again induces a reduced causal model in which $X$, $Z$, and $W$ influence $\Phi_k(t)$ (\cref{fig:sfm-reduce}).
Our analysis proceeds by quantifying, for each event type $k$, the
$x$-specific direct, indirect, and spurious effects of $X$ on
$\mathrm{CIF}_k(t)$. Intuitively, these effects describe how changes in the protected attribute $X$ along different causal pathways influence the cumulative incidence of event $k$ over time. In addition, we also quantify the corresponding effects on the all-cause survival function $S_{\mathrm{all}}(t)$, thereby providing both cause-specific and aggregate perspectives on survival disparities. Regarding estimation, we can again either follow a model-based or a doubly-robust approach. A discussion of estimation in the CR setting, including the derivation of influence functions for CIFs, is given in \cref{appendix:dr}.

\subsection{Causal Fairness under Informative Censoring} \label{sec:ic}
The third and final setting we consider is that of \emph{informative censoring} (IC). This setting is very common in real-world data and arises when the mechanism by which individuals are censored carries information about the event of interest. Another common setting where informative censoring occurs is when multiple terminal events are possible, but we are interested in analyzing just one of them.
Formally, in the IC setting, we consider an outcome with event time $T$ and a censoring time $C$ that is informative about $T$. Unlike the settings of NIC (\cref{sec:classic}) and CR (\cref{sec:cr}), we no longer assume that $T \ci C \mid X,Z,W$ (see \cref{fig:sfm-ic} where there is a bidirected arrow $T \bidir C$). As a result, standard tools from survival analysis are not directly applicable. 
The object of interest in this setting is typically the survival function $S(t \mid X, Z, W) =P(T > t \mid X,Z,W)$.
Specifically, in our running example, one may be interested in time-to-readmission to ICU in a world where death does not exist as a competing risk. As discussed previously, inference of such quantities is not possible without additional assumptions.
Therefore, to proceed, we adopt a sensitivity analysis approach based on assumptions about the joint law of $(T,C)$. Specifically, we assume that the joint survival function $H(t,c) = P(T > t, C > c)$
is described by an Archimedean copula
\begin{align} \label{eq:copula-coupling}
    H(t, c) = \mathcal{C}_\tau (S(t), G(c))
\end{align}
parameterized by Kendall’s $\tau$, which is our sensitivity parameter. Let $\varphi$ be the generator function of the copula, which is typically decreasing and convex. By definition, \cref{eq:copula-coupling} implies the relationship
\begin{align} \label{eq:arch-cop}
\varphi\!\left(H(t,c)\right) = \varphi\!\left(S(t)\right) + \varphi\!\left(G(c)\right),
\end{align}
where $S(t)=P(T>t)$ and $G(c)=P(C>c)$ denote the survival functions of $T$ and $C$.

\paragraph{Copula-Graphic Estimation.}
The copula-graphic estimator (CGE) exploits the relationship in
\cref{eq:arch-cop} to recover the marginal survival functions $S(t)$ and $G(c)$ from observable quantities. In the classical CGE setting, time is discretized on a grid $\{t_1,\dots,t_m\}$, and it is assumed that at each grid point only one of $S(t)$ or $G(t)$ exhibits a jump, with both functions being piecewise constant. Under these assumptions, $S(t)$ and $G(t)$ can be recursively identified from the estimable $\mathrm{CIF}_T(t), \mathrm{CIF}_C(t)$ of $T, C$. This approach is known as the CGE \citep{braekers2005copula}.
In our setting, however, estimation proceeds on a fixed grid of estimated CIFs, and we cannot assume that only one of $\hat S(t_i)$ or $\hat G(t_i)$ jumps at a given grid point. Consequently, the classical CGE must be adapted: the copula parameter $\tau$ no longer uniquely identifies $\hat S(t_{i+1})$ and $\hat G(t_{i+1})$ from the estimated CIFs, but identifies their bounds:
\begin{theorem}[CGE bounds under non-disjoint increments -- informal] \label{thm:cge-bounds}
Consider an Archimedean copula with generator $\varphi$ and parameter $\tau$. Given $(S(t_i), G(t_i))$ and the CIF increments $\Delta\mathrm{CIF}_T, \Delta\mathrm{CIF}_C$ over $(t_i, t_{i+1}]$, there exist sharp bounds $\underline{S}(t_{i+1}) \leq S(t_{i+1}) \leq \overline{S}(t_{i+1})$ and $\underline{G}(t_{i+1}) \leq G(t_{i+1}) \leq \overline{G}(t_{i+1})$, computable in closed form from $\varphi$ and $(S(t_i), G(t_i))$. The bounds collapse to point estimates as $\max_i |t_{i+1} - t_i| \to 0$.
\end{theorem}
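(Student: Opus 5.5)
Write $H_i = H(t_i,t_i)=\varphi^{-1}\big(\varphi(S(t_i))+\varphi(G(t_i))\big)$. This quantity is observable as $1-\mathrm{CIF}_T(t_i)-\mathrm{CIF}_C(t_i)$, since $P(M>t)=H(t,t)$. The plan is to express the unknown pair $(S(t_{i+1}),G(t_{i+1}))$ through the copula. It is convenient to work in generator coordinates, $a=\varphi(S(t_{i+1}))$ and $b=\varphi(G(t_{i+1}))$, with $a_0=\varphi(S(t_i))$ and $b_0=\varphi(G(t_i))$.

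\textbf{Step 1: the exact constraint.} From the observed increments we get the total mass $\Delta\mathrm{CIF}_T+\Delta\mathrm{CIF}_C=H_i-H_{i+1}$. By \cref{eq:arch-cop} this gives
\[
a+b=\varphi(H_{i+1}).
\]
This single linear constraint is the only exact information. The split of the increment between $a$ and $b$ is what the classical CGE pins down using the disjoint-jump assumption, and it is what we lose here.

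\textbf{Step 2: the inequality constraints.} The event $\{T\in(t_i,t_{i+1}],\, T\le C\}$ contains $\{t_i<T\le t_{i+1},\, C>t_{i+1}\}$ and is contained in $\{t_i<T\le t_{i+1},\, C>t_i\}$. In copula form this reads
\[
H(t_i,t_{i+1})-H_{i+1}\;\le\;\Delta\mathrm{CIF}_T\;\le\;H_i-H(t_{i+1},t_i),
\]
where $H(t_i,t_{i+1})=\varphi^{-1}(a_0+b)$ and $H(t_{i+1},t_i)=\varphi^{-1}(a+b_0)$. The analogous pair holds for $C$. We add monotonicity, $a\ge a_0$ and $b\ge b_0$ (recall $\varphi$ is decreasing).

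\textbf{Step 3: reduce to an interval.} Substitute $b=\varphi(H_{i+1})-a$. The $T$-inequalities become monotone conditions in the single variable $a$:
\begin{itemize}
\item $\varphi^{-1}(a+b_0)\le H_i-\Delta\mathrm{CIF}_T$ gives a lower bound on $a$;
\item $\varphi^{-1}\big(a_0+\varphi(H_{i+1})-a\big)\le H_{i+1}+\Delta\mathrm{CIF}_T$ gives an upper bound on $a$.
\end{itemize}
The $C$-inequalities give the mirror-image bounds on $b$, hence again bounds on $a$. Intersecting everything yields a closed interval $[\underline a,\overline a]$ with explicit endpoints, for example
\[
\underline a=\max\Big\{a_0,\ \varphi\big(H_i-\Delta\mathrm{CIF}_T\big)-b_0,\ \dots\Big\}.
\]
Mapping back through $\varphi^{-1}$, which is decreasing, gives $\overline S=\varphi^{-1}(\underline a)$, $\underline S=\varphi^{-1}(\overline a)$, and $G$ similarly. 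These are closed-form in $\varphi$, $S(t_i)$, $G(t_i)$ and the increments, as the statement requires.

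\textbf{Step 4: sharpness.} This is the step I expect to be the main obstacle. The task is to show that every $a$ in the interval is attained by some joint law with the given copula, margins agreeing with $(S(t_i),G(t_i))$, and the given CIF increments. I would first try an explicit construction. Take the extreme endpoint: all $T$-mass in the interval is placed just after $t_i$ and all $C$-mass just before $t_{i+1}$ (or the reverse). This realizes each inequality of Step 2 with equality. The cases where equality holds in the other inequality, or in $a\ge a_0$, need the same treatment.

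Interior points then need an intermediate placement of the jumps. The map from jump locations to $\Delta\mathrm{CIF}_T$ is continuous, so the intermediate value theorem should give any target. The delicate part is checking that this construction keeps the copula relation \cref{eq:copula-coupling} valid at all intermediate times, not only at the grid points.

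\textbf{Step 5: collapse.} If $S$ and $G$ are continuous, $\max_i|t_{i+1}-t_i|\to0$ makes $H(t_i,t_{i+1})-H_{i+1}$ and $H_i-H(t_{i+1},t_i)$ both differ from $\Delta\mathrm{CIF}_T$ by $o(\Delta\mathrm{CIF}_T)$. This is because the cross terms are second-order, of the form (jump in $S$) times (jump in $G$). So $\overline a-\underline a$ is second-order per step, and summed over the steps it tends to zero, which gives point identification. This recovers the classical CGE in the limit.
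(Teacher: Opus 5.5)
Your proposal is correct, and Steps 1--3 are the paper's proof with the missing justification supplied. The paper asserts, without proof, that the largest admissible value of $H(t_i,t_{i+1})$ is $S_{\mathrm{all}}(t_i)-\Delta\mathrm{CIF}_C=H_{i+1}+\Delta\mathrm{CIF}_T$, and symmetrically that $H(t_{i+1},t_i)\le H_i-\Delta\mathrm{CIF}_T$. It then reads off $\overline G$ and $\overline S$ from the copula identity at these off-diagonal points, and closes with the additive identity at $t_{i+1}$. Your event-containment sandwich is exactly the proof of those two inequalities. Your $\varphi^{-1}(\underline a)$ and $\varphi^{-1}(\overline a)$ are precisely the paper's $\overline S(t_{i+1})$ and $\underline S(t_{i+1})$.

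The remaining constraints add nothing. The $C$-inequalities are equivalent to the $T$-pair because $\Delta\mathrm{CIF}_T+\Delta\mathrm{CIF}_C=H_i-H_{i+1}$. Also $a\ge a_0$ follows from $\varphi(H_i-\Delta\mathrm{CIF}_T)-b_0\ge\varphi(H_i)-b_0=a_0$, and $b\ge b_0$ follows in the same way.

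For sharpness you use the same extremal configuration as the paper, and the obstacle you flag does not arise. Give $S$ a single jump at $s^*$ and $G$ a single jump at $c^*\in(s^*,t_{i+1}]$, and \emph{define} the joint law as $\mathcal{C}_\tau(S(t),G(c))$. This is a valid bivariate survival function for any margins, so the copula relation holds at all times by construction. Then $\Delta\mathrm{CIF}_T=P(T=s^*,C>t_i)=H_i-\varphi^{-1}(a+b_0)$. Choosing $a=\underline a$ therefore reproduces the given $\Delta\mathrm{CIF}_T$ exactly, and $\Delta\mathrm{CIF}_C$ then matches automatically. Sharp bounds need only the endpoints, so you do not have to realize interior points.

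The genuinely different part is the collapse step. The paper's argument is qualitative and aimed at step-function CIFs: once the grid separates the jump times, each cell has at most one nonzero increment and the bounds coincide. Yours is quantitative and also covers continuous margins, where every cell has both increments positive. To make it airtight, state explicitly what licenses summing over steps. The quantities $\underline a-a_0=\varphi(H_i-\Delta\mathrm{CIF}_T)-\varphi(H_i)$ and $\overline a-a_0=\varphi(H_{i+1})-\varphi(H_{i+1}+\Delta\mathrm{CIF}_T)$ depend only on observables. So the one-step uncertainty is translation-invariant in generator coordinates, and widths add under iteration. Then bound each width directly as a mixed second difference of $\varphi$:
\[
\varphi(H_{i+1})+\varphi(H_i)-\varphi(H_{i+1}+\Delta\mathrm{CIF}_T)-\varphi(H_{i+1}+\Delta\mathrm{CIF}_C)\le\Delta\mathrm{CIF}_T\,\omega(\Delta\mathrm{CIF}_C),
\]
where $\omega$ is the modulus of continuity of $\varphi'$ on $[H(t_m,t_m),1]$. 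This vanishes when either increment is zero, which recovers the paper's argument as a special case. Its sum over cells is at most $\omega(\max_i\Delta\mathrm{CIF}_{C,i})\to0$, and uniform continuity of $\varphi^{-1}$ transfers the collapse to $S$ and $G$.

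This route is preferable to your cross-term argument via $P(T\in(t_i,t_{i+1}],\,C\in(t_i,t_{i+1}])$. That term is of order (jump in $S$)$\times$(jump in $G$) only when the copula density is bounded. For families with tail dependence at $(1,1)$, such as Gumbel, it is of the same order as $\Delta\mathrm{CIF}_T$ in the cells near $t=0$.
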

The formal statement and proof are in \cref{appendix:copula-graphic}. After establishing bounds, we obtain point estimates via midpoint interpolation, $\hat S(t_{i+1}) = \tfrac12(\underline{S}(t_{i+1}) + \overline{S}(t_{i+1}))$, and analogously for $\hat G(t)$, yielding a point-identified estimator of $S(t)$ as the grid becomes refined.
Finally, after recovering $\hat S(t)$, which is the target quantity in the informative censoring setting, we may once again invoke
\cref{thm:causal-reduction} and compute the $x$-specific direct, indirect, and spurious effects of $X$ on $S(t)$. This allows us to quantify causal fairness under informative censoring, through a sensitivity analysis based on the $\tau$ parameter.

\paragraph{Two Modeling Routes for Informative Censoring.}
The copula assumption in \cref{eq:arch-cop} may be imposed at two distinct levels of granularity, leading to two modeling approaches.

\emph{Route I (Conditional Copula Given Covariates).} The first approach is to model the joint $P(T, C \mid X,Z,W)$ by an Archimedean copula with parameter $\tau$. In this case, the dependence structure for $(T,C)$ is specified conditionally on $(X,Z,W)$. However, in the IC setting, the survival $S(t \mid X, Z, W)$ is obtained recursively, with each step involving a complex non-linear map based on $\varphi$, making doubly-robust estimation difficult. We thus focus on model-based estimation for this route.

\emph{Route II (Population-Level Copula for Potential Outcomes).}
The second approach is to impose the copula assumption on the joint law of the \emph{potential outcomes} $(T,C)$ after marginalization over $(Z,W)$, that is, on quantities of the form
\begin{align}
    P ( T_{x_y,W_{x_w}}, C_{x_y,W_{x_w}} \mid x_z).
\end{align}
In this case, the copula allows us to reconstruct population-level survival quantities from $\mathrm{CIF}_T, \mathrm{CIF}_C$, which can be estimated via a doubly-robust procedure. 
The sensitivity parameter $\tau$ is thus interpreted as governing dependence at the population level.  The final challenge in this approach lies in the propagation of uncertainty over the CIF estimates into the uncertainty over the $P( T_{x_y,W_{x_w}} > t \mid x_z)$, which we discuss in Appx.~\ref{appendix:envelopes}.
\section{Case Study -- Race Differentials in ICU} \label{sec:experiments}
Finally, we showcase the usefulness of our methodology on a real-world study. Our task is to study racial disparities in outcome after admission to an intensive care unit (ICU). 
Disparities in post-ICU outcomes across race and ethnicity are known to exist \citep{mcgowan2022racial, plecko2025algorithmic}, although the topic has not been studied extensively in the literature. 
We make use of the data Adult Patient Database (APD) of the Australia and New Zealand Intensive Care Society (ANZICS) \citep{stow2006development, secombe2023thirty}, and the acknowledgement of contributing hospitals is given in \cref{appendix:anzics-ack}. 
The ANZICS APD receives submissions from 98\% of ICUs in Australia. 
We differentiate four variable groups: indigenous status;
sociodemographic variables (age, sex, postcode-based SES);
ICU-relevant clinical variables, including the APACHE-III
\citep{knaus1991apache} predicted risk of death, ANZICS
modified APACHE-III admission diagnosis (see
\href{https://www.anzics.org/wp-content/uploads/2021/03/ANZICS-APD-Dictionary-Version-6.1.pdf}{full list} of
considered diagnoses), and indicator of whether admission
was elective; and outcomes $\{T_1, T_2\}$, where $T_1$ is
time to death after ICU admission as registered in the
database or the National Death Index (NDI), while $T_2$ is
the time to readmission to ICU as recorded in the database.
\begin{figure}[t]
    \centering
    \includegraphics[width=\textwidth]{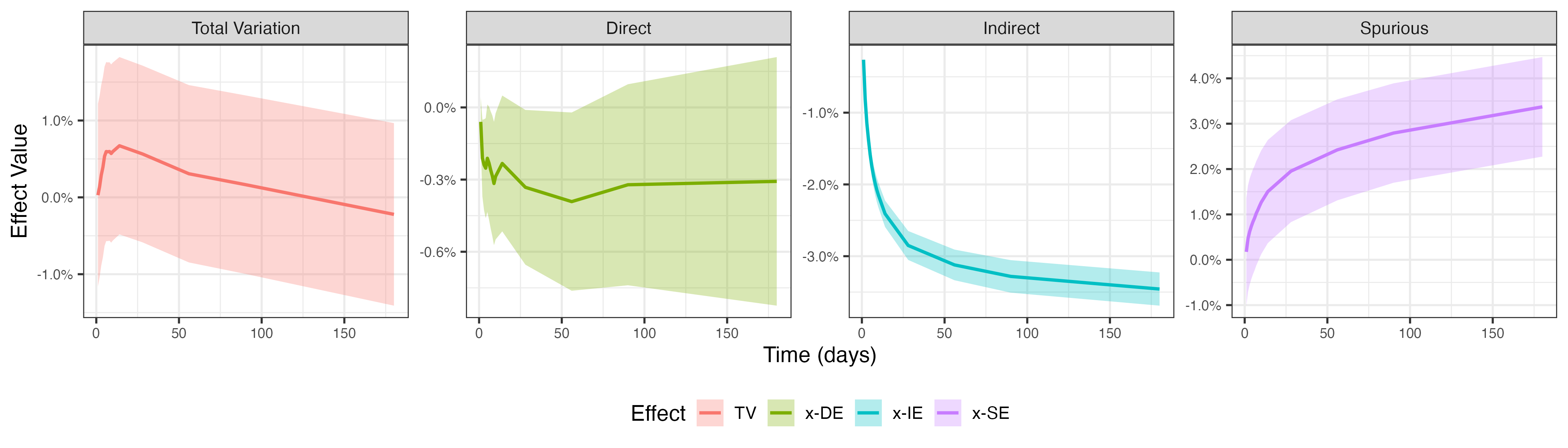}
    \caption{Case study results: survival curves, disparity metrics, and causal decompositions.}
    \label{fig:death-effects}
\end{figure}

We now justify the causal structure. Indigenous status and
sociodemographic variables may share common causes through
historical and socioeconomic processes, but neither is a
cause of the other, motivating the $X \bidir Z$
relationship and follows the modeling choice adopted in previous literature \citep{mcgowan2022racial, plecko2025algorithmic}. Clinical variables are temporally downstream
from both: for instance, indigenous status may influence
illness severity through disparities in healthcare access,
chronic disease burden, and preventive care. All variables
may influence mortality and readmission. This
justifies treating indigenous status as $X$,
sociodemographics as confounders $Z$, clinical variables
as mediators $W$, and survival times as the outcome.

We acknowledge two potential limitations regarding
unmeasured confounding. First, genetic variation may
act as a common cause of indigenous status and disease
susceptibility, confounding the $X \to W$ relationship.
Second, geographic remoteness may confound multiple
relationships: it can independently influence indigenous
status composition, clinical presentation at admission,
and post-discharge outcomes ($X$, $W$, and $T$).
Sensitivity analysis for such unmeasured confounding
in mediation settings is an important direction for
future work.
We analyzed all admissions from 181 hospitals across Australia between January 1st 2023 to July 1st 2024, yielding a cohort of $249,958$ patients, of whom $10,436$ were Indigenous, $P(X=x_0) \approx 4.2\%$.
In our setting, the censoring mechanism is related to the data extraction process (administrative reason), and hence the key assumption of non-informative censoring, implied by the conditional independence of $T, C$ given $X, Z, W$ is justified. 

\paragraph{Non-Informative Censoring -- Death Disparities.} 
We first focus on differences in death rates, studying how survival disparities evolve over time.
We consider non-zero values of direct, indirect, or spurious effects as evidence of disparities between groups.
Due to administrative censoring, our setting falls under non-informative censoring (NIC) studied in \cref{sec:classic}. 
We begin by estimating the survival differences for $x_0, x_1$ populations over time, measured by the TV measure in \cref{eq:surv-tv}, visualized in \cref{fig:death-effects} (far left), which serves as a statistical baseline commonly used in the literature \citep{rahman2022fair, liu2025equitable}. 
As the plot shows, in the first 3 weeks after ICU admission, the survival is increasingly higher for the majority group $x_1$, but after this the TV measure drops back down, with the point estimate negative at $T=180$ days, meaning that minority patients have higher survival probability at that point in time. 
However, 95\% confidence intervals (obtained assuming asymptotic normality of our cross-fitted doubly robust approach) include $0$ after $T=80$ days.

We then apply the temporal decomposition discussed in \cref{sec:classic}, breaking $P(T > t \mid X = x_1) - P(T > t \mid X = x_0)$ into contributions from the direct, indirect, and spurious pathways, for different values of $t$. 
The decomposition is shown in \cref{fig:death-effects} (three facets to the right), and highlights interesting results that may be possibly overlooked by a classical statistical approach. 
First, we see that the direct effect decreases up to $T=50$ days after ICU admission, after which it stabilizes. The sign of this effect implies that minority patients experience \emph{improved survival} along the direct path.
Secondly, we see that the indirect effect decreases over time. The sign of the effect implies that minority patients have \emph{reduced survival} along this causal pathway (due to higher burden of chronic and acute illness), which reaches almost $3\%$ difference after $T=50$ days, and continues to amplify up to $T = 180$ days. We remark these are rather large effects in a cohort with a marginal mortality rate of $< 10\%$.
Thirdly, for the spurious effect, we see an increasing value up to $T = 180$ days. This implies that minority patients have improved survival along the spurious pathway, which can be related to lower overall age at admission, which naturally reduces the mortality rate.
Taken together, the decomposition reveals that the near-zero TV measure masks substantial, opposing pathway-specific effects, uncovering a pattern that would be missed by standard statistical fairness analyses.

\begin{figure}[t]
    \centering
    \includegraphics[width=\textwidth]{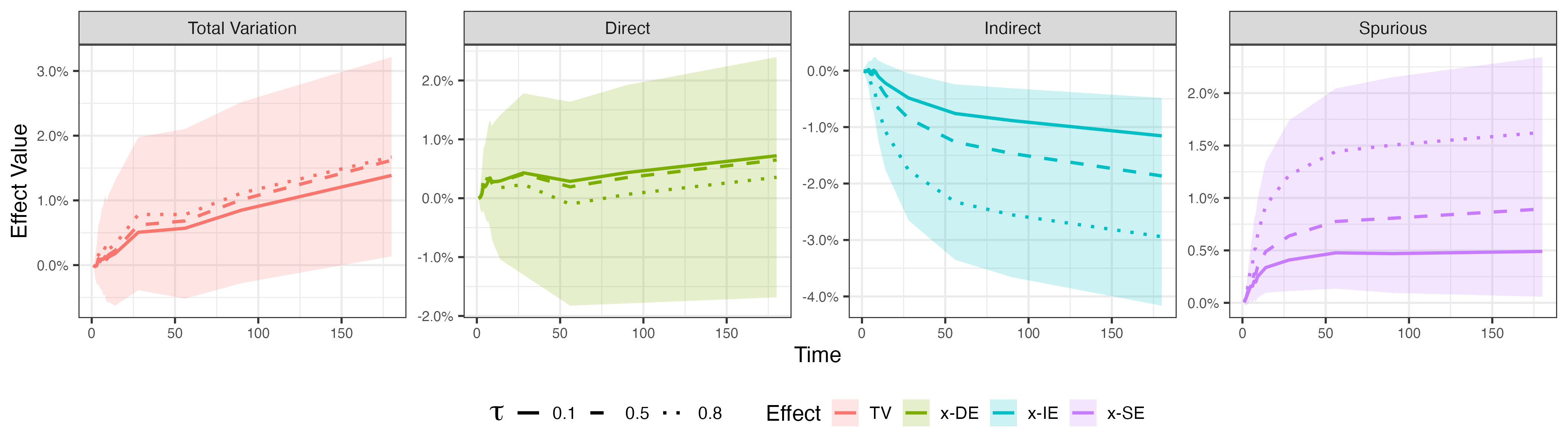}
    \caption{Case study results: survival curves, disparity metrics, and causal decompositions.}
    \label{fig:readm-effects}
\end{figure}
\paragraph{Informative Censoring -- Readmission Disparities.}
In the second analysis, we focus on the readmission outcome as our primary target. Readmission is a relevant post-ICU outcome since it has a slightly different etiology than death, as it reflects downstream clinical stability, discharge decisions, and post-ICU recovery processes rather than acute physiological failure.
Furthermore, readmission is also interesting as it has a direct implication on the burden of care placed on health systems. 
In this context, we wish to study the disparities in time-to-readmission $P(T_2 > t)$, in the hypothetical world where death does not act as a censoring outcome (clearly, in the real world, death prevents us from observing a possible need for readmission).
Due to informativeness of death censoring for readmission, which has unobserved common causes with readmission (latent illness severity, unmeasured frailty, treatment limitations, post-discharge care quality), our setting falls under informative censoring (IC).
We perform the sensitivity analysis from \cref{sec:ic}, and use Route II described therein for estimation of causal effects.
Based on domain knowledge, we assume that event times $T_1, T_2$ are positively correlated ($\tau > 0$), as they share common causes that make both events more likely, including functional reserve, complications during ICU stay, and post-discharge care.

The results of the sensitivity analysis with $\tau \in \{0.1, 0.5, 0.8\}$ are shown in \cref{fig:readm-effects}, where the ribbons indicate maximal and minimal 95\% confidence intervals across $\tau$ values. 
The TV measure increases over time, meaning that burden of readmission is higher for minority patients. Along the direct effect, minority patients require more readmissions, opposite to the finding for the death outcome. 
However, the confidence bounds for these measures do not allow us to make definitive conclusions about effect sizes.
Next, along the indirect effect, the burden of readmission is increased for the minority group, due to more complicated admission diagnoses and worse chronic health compared to the majority population. Finally, for the spurious effect, minority patients again have a lower burden of readmission, likely due to lower average age at time of ICU admission. 

\paragraph{Limitations \& Future Work.}
Our framework rests on the lack-of-confounding assumptions encoded in the SFM (Fig.~\ref{fig:sfm-nic}). While these assumptions are prevalent in the causal fairness literature \citep{plecko2022causal, zhang2018fairness}, they should not be taken for granted, but carefully justified on a case-by-case basis. 
Developing sensitivity analyses for causal decompositions in presence of unmeasured confounding is an important direction for future work. A further limitation is the reliance on the Archimedean copula assumption in the informative censoring setting (Sec.~\ref{sec:ic}); while we treat $\tau$ as a sensitivity parameter, the choice of copula family itself is an additional modeling assumption that future work could relax.
Finally, our formulation currently assumes time-invariant covariates; and our competing risks analysis does not isolate primary from competing hazard effects. We leave these extensions for future work.

\newpage

\ifnum\treportflag=0
\section*{Broader Impact Statement}
This work introduces causal fairness methodology for analyzing disparities in time-to-event settings. The proposed methods are intended as analytical tools for understanding and quantifying inequities, rather than for automated decision-making. We do not foresee direct ethical risks or potential harms arising from this work. On the contrary, by enabling more principled analyses of disparities in survival settings, we hope the methods may contribute to identifying, understanding, and ultimately reducing unfair outcomes in applied domains.
\fi

\bibliographystyle{abbrvnat}
\bibliography{refs}

\newpage
\appendix
\section*{\centering\Large Technical Appendices for \textit{Causal Fairness for Survival Analysis}}
The source code for reproducing all the experiments can be found in our anonymized code repository \url{https://anonymous.4open.science/r/cfa-survival-B880}. 
All experiments were performed on a MacBook Pro, with the M3 Pro chip and 36 GB RAM on macOS 26.2 (Tahoe). All experiments can be run with less than 24 hours of compute on the above-described machine or equivalent.

\section{Proofs} \label{appendix:proofs}

\begin{proof}[\cref{thm:causal-reduction} Proof:]
Let $\mathcal{M}$ be an SCM compatible with the SFM in \cref{fig:sfm-ic}, and let $f_T$ be the structural mechanism of $T$, taking $X, Z, W$ as inputs together with the noise variables $U_T, U_C$. The conditional distribution $P(T \mid X=x, Z=z, W=w)$ is determined by $f_T$ and the distribution of the noise variables, and depends on $X, Z, W$ only through their realized values $x, z, w$. Therefore, the random variable $\Phi \overset{\Delta}{=} \phi(P(T \mid X, Z, W))$ has the structural mechanism
\begin{align}
    f_{\Phi}(x, z, w) = \phi(P(T \mid X=x, Z=z, W=w)),
\end{align}
which is a deterministic function of $X, Z, W$ and does not depend on any of the noise variables $U$. Consequently, $\Phi$ can be added to the causal diagram as in \cref{fig:sfm-reduce}, completing the proof.
\end{proof}

\begin{proof}[\cref{prop:id} Proof:]
We demonstrate that the potential outcome $\ex[\Phi_{x_y, W_{x_w}}(t) \mid X = x_z]$ is identifiable under the Standard Fairness Model. For this purpose, we make use of the counterfactual graph \citep{shpitser2012counterfactuals} in \cref{fig:ctf-sfm}. We expand $\ex[\Phi_{x_y, W_{x_w}}(t) \mid x_z]$ as follows:
\begin{align}
    &= \sum_{w} \ex[\Phi_{x_y, w}(t) \mathbbm{1}(W_{x_w} = w) \mid x_z] \quad \text{(Counterfactual Un-nesting)} \\
    &= \sum_{z, w} \ex[\Phi_{x_y, w}(t) \mathbbm{1}(W_{x_w} = w) \mid z, x_z] P(z \mid x_z) \quad \text{(Law of Total Probability)} \\
    &= \sum_{z, w} \ex[\Phi_{x_y, w}(t) \mid z, x_z] P(W_{x_w} = w \mid z, x_z) P(z \mid x_z) \quad (\Phi_{x_y, w} \ci W_{x_w} \mid Z, X) \\
    &= \sum_{z, w} \ex[\Phi_{x_y, w}(t) \mid z, x_z] P(W_{x_w} = w \mid z, x_w) P(z \mid x_z) \quad (W_{x_w} \ci X \mid Z) \\
    &= \sum_{z, w} \ex[\Phi_{x_y, w}(t) \mid z, x_z] P(W = w \mid z, x_w) P(z \mid x_z) \quad \text{(Consistency)} \\
    &= \sum_{z, w} \ex[\Phi_{x_y, w}(t) \mid z, w, x_z] P(W = w \mid z, x_w) P(z \mid x_z) \quad (\Phi_{x_y, w} \ci W \mid Z, X) \\
    &= \sum_{z, w} \ex[\Phi_{x_y, w}(t) \mid z, w, x_y] P(W = w \mid z, x_w) P(z \mid x_z) \quad (\Phi_{x_y, w} \ci X \mid Z, W) \\
    &= \sum_{z, w} \ex[\Phi(t) \mid z, w, x_y] P(W = w \mid z, x_w) P(z \mid x_z) \quad \text{(Consistency)},
    \label{eq:phi-id}
\end{align}
which completes the proof.
\end{proof}

\begin{proof}[\cref{prop:model-est} Proof:]
We focus on the quantity $\ex[\Phi_{x_y, W_{x_w}}(t) \mid x_z]$. Suppose that we have:
\begin{align} \label{eq:consist-fxyzw}
    f(x_y, z, w) &\xrightarrow{P} \ex[\Phi(t) \mid x_y, z, w] \;\forall z, w \\
    \hat P(x_w \mid w, z) &\xrightarrow{P} P(x_w \mid w, z) \;\forall z, w \\
    \hat P(x_w \mid z) &\xrightarrow{P} P(x_w \mid z) \;\forall z \\
    \hat P(x_z \mid z) &\xrightarrow{P} P(x_z \mid z) \;\forall z \\
    \hat P(z, w) &\xrightarrow{P} P(z, w) \;\forall w, z \\
    \hat P(x_z) &\xrightarrow{P} P(x_z) \label{eq:consist-pxz}
\end{align}
where $f$ and the $\hat P(\cdot \mid \cdot)$ terms are estimators, and $\hat P(z, w), \hat P(x_z)$ are the empirical distributions:
\begin{align}
    \hat P(z, w) &= \frac{1}{n} \sum_{i=1}^n \mathbbm{1}(Z_i = z, W_i = w) \\
    \hat P(x_z) &= \frac{1}{n} \sum_{i=1}^n \mathbbm{1}(X_i = x_z).
\end{align}
Applying Bayes' rule to the identification expression in Eq.~\ref{eq:phi-id}, we re-write it as:
\begin{align}
    \sum_{z, w} \ex[\Phi(t) \mid z, w, x_y] \, P(z, w) \, \frac{P(x_w \mid w, z)}{P(x_w \mid z)} \, \frac{P(x_z \mid z)}{P(x_z)}.
\end{align}
By a coupling of quantities in Eqs.~\ref{eq:consist-fxyzw}-\ref{eq:consist-pxz} to a joint probability space and an application of the continuous mapping theorem we obtain that
\begin{align}
    &\sum_{z, w} f(x_y, z, w) \, \hat P(z, w) \, \frac{\hat P(x_w \mid w, z)}{\hat P(x_w \mid z)} \, \frac{\hat P(x_z \mid z)}{\hat P(x_z)} \\
    &\xrightarrow{P} \sum_{z, w} \ex[\Phi(t) \mid z, w, x_y] \, P(z, w) \, \frac{P(x_w \mid w, z)}{P(x_w \mid z)} \, \frac{P(x_z \mid z)}{P(x_z)}.
\end{align}
Finally, note that we have
\begin{align}
    &\sum_{z, w} f(x_y, z, w) \hat P(z, w) \frac{\hat P(x_w \mid w, z)}{\hat P(x_w \mid z)} \frac{\hat P(x_z \mid z)}{\hat P(x_z)} \\
    &= \sum_{z, w} f(x_y, z, w) \Big(\sum_{i=1}^n \frac{\mathbbm{1}(Z_i = z, W_i = w)}{n}\Big) \frac{\hat P(x_w \mid w, z)}{\hat P(x_w \mid z)} \frac{\hat P(x_z \mid z)}{\hat P(x_z)} \\
    &= \frac{1}{n} \sum_{i=1}^n f(x_y, z_i, w_i) \frac{\hat P(x_w \mid w_i, z_i)}{\hat P(x_w \mid z_i)} \frac{\hat P(x_z \mid z_i)}{\hat P(x_z)},
\end{align}
completing the proof of the proposition.
\end{proof}
\begin{figure}[t]
    \centering
    \begin{tikzpicture}
	 [>=stealth, rv/.style={thick}, rvc/.style={triangle, draw, thick, minimum size=7mm}, node distance=18mm]
	 \pgfsetarrows{latex-latex};
	 \begin{scope}
		\node[rv] (Z) at (0,1) {$Z$};
	 	\node[rv] (X) at (-1.5,0.5) {$X$};
	 	\node[rv] (W) at (0,-0.5) {$W$};
            \node[rv] (Phi) at (1.8,0.5) {$\Phi_{x_y, w}$};
            \node[rv] (Wxw) at (0,-1.5) {$W_{x_w}$};
            \node[rv] (xy) at (-1.5,-0.25) {${x_y}$};
            \node[rv] (xw) at (-1.5,-1) {${x_w}$};
            \node[rv] (w) at (0.85,-0.5) {${w}$};
            \draw[->] (Z) -- (W);
            \draw[->] (X) -- (W);
            \draw[->] (xy) -- (Phi);
            \draw[->] (w) -- (Phi);
            \draw[->] (xw) -- (Wxw);
            \draw[->] (Z) -- (Phi);
            \draw[->] (Z) edge[bend left = -30] (Wxw);
		\path[<->] (Z) edge[bend left = -30, dashed] (X);
            \path[<->] (W) edge[bend left = 30, dashed] (Wxw);
	 \end{scope}
    \end{tikzpicture}
    \caption{Counterfactual graph of the SFM used in the proof of \cref{prop:id}.}
    \label{fig:ctf-sfm}
\end{figure}
The proof of \cref{thm:dr-est} is given in \cref{appendix:dr}, while the proof of \cref{thm:cge-bounds} is given in \cref{appendix:copula-graphic}.
\newpage
\section{Alternative Scales for Quantifying Survival Disparities}
\label{appendix:alt-scales}
In this appendix, we discuss alternative scales for quantifying survival disparities across demographic groups, beyond the survival difference metric used in the main text (see \cref{eq:surv-tv}). The alternative metrics that may be of interest include restricted mean survival time (RMST) differences \citep{royston2013restricted} and (cumulative) hazard ratios \citep{klein1997survival}. Importantly, all such metrics can be accommodated within the causal framework developed in this paper.
Formally, we may be interested in a metric $d\big(P(T \mid X = x_1),\, P(T \mid X = x_0)\big)$, where $P(T \mid X = x)$ denotes the survival distribution for group $X=x$. The possible quantities of interest include:
\begin{enumerate}[label=(\roman*)]
    \item \label{metric:surv} \emph{Survival differences} (used in \cref{sec:classic})
    \begin{align}
        P(T > t \mid X = x_1) - P(T > t \mid X = x_0),
        \qquad t \in [0, \infty),
    \end{align}
    \item \label{metric:rmst} \emph{Restricted mean survival time (RMST) differences}
    \begin{align}
        \ex\!\left[ \min\{T, t\} \mid X = x_1 \right]
        -
        \ex\!\left[ \min\{T, t\} \mid X = x_0 \right],
        \qquad t \in [0, \infty),
    \end{align}
    \item \label{metric:hazard} \emph{(Cumulative) hazard ratios}
    \begin{align}
        \frac{\ex\!\left[\eta(t \mid X, Z, W) \mid X = x_1\right]}
             {\ex\!\left[\eta(t \mid X, Z, W) \mid X = x_0\right]},
        \qquad t \in [0, \infty),
    \end{align}
    where $\eta(t \mid X, Z, W)$ denotes the (cumulative) hazard function.
\end{enumerate}
These metrics provide different scales on which time-evolving disparities between groups may be quantified (depending on the subject question of interest). Regardless of the chosen metric, our objective is to decompose each such disparity measure into contributions arising from the direct path $X \to T$, the indirect path $X \to W \to T$, and the spurious path $X \bidir Z \to T$.

\paragraph{Reduction to a Deterministic Outcome.}
As described in \cref{sec:classic}, the key observation enabling a unified treatment of the above disparity measures is that each can be written as a functional of the conditional distribution $P(T \mid X,Z,W)$. Specifically, let $\phi$ be a functional mapping $P(T \mid X,Z,W)$ to $\mathbbm{R}$, and define the random variable
\begin{align}
    \Phi \;\overset{\Delta}{=}\; \phi\!\left(P(T \mid X,Z,W)\right).
\end{align}
Here, $\Phi$ is a deterministic function of $(X,Z,W)$, while $\phi$ operates on the conditional survival distribution. With this notation, each metric above can be written either on a difference scale,
\begin{align}\label{eq:phi-diff-2}
    \ex[\Phi \mid X = x_1] - \ex[\Phi \mid X = x_0],
\end{align}
or on a ratio scale,
\begin{align}
    \frac{\ex[\Phi \mid X = x_1]}{\ex[\Phi \mid X = x_0]}.
    \label{eq:phi-ratio}
\end{align}
For example, taking $\Phi = P(T>t \mid X,Z,W)$ in \cref{eq:phi-diff-2} yields the survival difference metric in \ref{metric:surv}, while taking $\Phi = \eta(t \mid X,Z,W)$ in \cref{eq:phi-ratio} yields the cumulative hazard ratio metric in \ref{metric:hazard}.
As described in \cref{thm:causal-reduction}, since $\Phi$ is a deterministic function of $(X,Z,W)$, it does not depend on any exogenous noise variables. Consequently, $\Phi$ may be added as a node to the causal graph implied by the Standard Fairness Model (SFM), inheriting the same causal structure as any other deterministic transformation of $(X,Z,W)$.
This observation allows us to define direct, indirect, and spurious effects of $X$ on $\Phi$, and the associated TV decomposition as in \cref{eq:tv-decomp}. The RMST metric in \ref{metric:rmst} can be analyzed using the same decomposition. Ratio metrics as in \ref{metric:hazard}, however, require a different approach. Here we provide the formal result that allows us to decompose statistical differences on the ratio scale into their causal contributions along pathways:
\begin{proposition}[$x$-specific Decomposition for Ratio Scales] \label{prop:x-decomp-ratio}
    Let $\Phi$ denote a random variable. The $x$-specific direct, indirect, and spurious ratio-scale effects of $X$ on $\Phi$ are defined as
    \begin{align}
        x\text{-DR}_{x_0,x_1}(\Phi\mid x)
        &= \frac{\ex[\Phi_{x_1,W_{x_0}} \mid X = x]}{\ex[\Phi_{x_0,W_{x_0}} \mid X = x]}, \\
        x\text{-IR}_{x_1,x_0}(\Phi\mid x)
        &= \frac{\ex[\Phi_{x_1,W_{x_0}} \mid X = x]}{\ex[\Phi_{x_1} \mid X = x]}, \\
        x\text{-SR}_{x_1,x_0}(\Phi)
        &= \frac{\ex[\Phi_{x_1} \mid X = x_0]}{\ex[\Phi_{x_1} \mid X = x_1]}.
    \end{align}
    Using these measures, the total variation ratio $\frac{\ex[\Phi \mid x_1]}{\ex[\Phi \mid x_0]}$ can be decomposed as:
    \begin{align}
        \frac{\ex[\Phi \mid x_1]}{\ex[\Phi \mid x_0]}
    &=
    x\text{-DR}_{x_0,x_1}(\Phi\mid x_0)
    \times
    \big[x\text{-IR}_{x_1,x_0}(\Phi\mid x_0)\big]^{-1}
    \times
    \big[x\text{-SR}_{x_1,x_0}(\Phi)\big]^{-1}.
    \end{align}
\end{proposition}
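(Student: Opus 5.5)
The approach is a telescoping product that mirrors the additive decomposition in \cref{eq:tv-decomp}. The multiplicative structure is the same; only the ratio scale replaces differences. First, I use consistency to rewrite the two marginal quantities as potential outcomes. In the SFM, $X=x$ implies $\Phi = \Phi_{x} = \Phi_{x, W_x}$, so that
\begin{align}
\ex[\Phi \mid x_1] = \ex[\Phi_{x_1} \mid x_1], \qquad \ex[\Phi \mid x_0] = \ex[\Phi_{x_0, W_{x_0}} \mid x_0].
\end{align}
This step uses composition: $\Phi_{x_0} = \Phi_{x_0, W_{x_0}}$ for every unit.

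Next, I insert and cancel the two intermediate counterfactual quantities $\ex[\Phi_{x_1, W_{x_0}} \mid x_0]$ and $\ex[\Phi_{x_1} \mid x_0]$. Concretely,
\begin{align}
\frac{\ex[\Phi \mid x_1]}{\ex[\Phi \mid x_0]}
= \frac{\ex[\Phi_{x_1, W_{x_0}} \mid x_0]}{\ex[\Phi_{x_0, W_{x_0}} \mid x_0]}
\cdot \frac{\ex[\Phi_{x_1} \mid x_0]}{\ex[\Phi_{x_1, W_{x_0}} \mid x_0]}
\cdot \frac{\ex[\Phi_{x_1} \mid x_1]}{\ex[\Phi_{x_1} \mid x_0]}.
\end{align}
I then match each factor to a definition in the statement:
\begin{itemize}
\item the first factor is $x\text{-DR}_{x_0,x_1}(\Phi \mid x_0)$;
\item the second factor is the reciprocal of $x\text{-IR}_{x_1,x_0}(\Phi\mid x_0)$;
\item the third factor is the reciprocal of $x\text{-SR}_{x_1,x_0}(\Phi)$.
\end{itemize}
Multiplying the three factors gives the claimed identity. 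If desired, each factor is identifiable via \cref{prop:id}, but this is not needed for the algebraic decomposition.

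The only real obstacle is well-definedness. Every denominator must be nonzero, namely $\ex[\Phi\mid x_0]$, $\ex[\Phi_{x_1,W_{x_0}}\mid x_0]$, $\ex[\Phi_{x_1}\mid x_0]$ and $\ex[\Phi_{x_1}\mid x_1]$. I would add this as a standing assumption. It is automatically satisfied when $\Phi$ is almost surely positive, as for cumulative hazards with $t>0$ or survival probabilities bounded away from zero. The reason is that each of these expectations then averages a positive quantity over a measure with positive mass, since $P(x_0), P(x_1) > 0$ is implied by the strict positivity of $P(u)$.
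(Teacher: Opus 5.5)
Your telescoping argument is correct, and it is the multiplicative analogue of the argument behind \cref{eq:tv-decomp}; the paper states this proposition as that analogue and gives no separate proof. Consistency and composition turn the two marginals into $\ex[\Phi_{x_1}\mid x_1]$ and $\ex[\Phi_{x_0,W_{x_0}}\mid x_0]$, and inserting $\ex[\Phi_{x_1,W_{x_0}}\mid x_0]$ and $\ex[\Phi_{x_1}\mid x_0]$ gives exactly $x\text{-DR}\times x\text{-IR}^{-1}\times x\text{-SR}^{-1}$. One small correction to your well-definedness remark: strict positivity of $P(u)$ does not by itself force $P(x_0),P(x_1)>0$ (for example, $f_X$ could be constant), so this must simply be assumed, as it already implicitly is for the conditional expectations in the statement to be defined.
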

\cref{prop:x-decomp-ratio} can be seen as an analogue of the result in \cref{eq:tv-decomp} that is useful for metrics where ratios are more natural than differences, such as hazard ratios or cumulative hazard ratios \citep{klein1997survival}.
\newpage
\section{Doubly Robust Estimation}
\label{appendix:dr}
In this appendix, we describe the construction of influence functions for the causal estimands used in \cref{sec:classic}. We first focus on the potential outcomes of the survival curve $S(t \mid X, Z, W)$. For a fixed $t \geq 0$, our target parameter $\psi(t)$ is the potential outcome $\ex\!\left[ S_{x_y, W_{x_w}}(t) \mid X = x_z \right]$, where $S_{x_y, W_{x_w}}(t) = P(T_{x_y, W_{x_w}} > t)$.  
Under the Standard Fairness Model (Fig.~\ref{fig:sfm-nic}) with non-informative censoring $T \ci C \mid X,Z,W$, \cref{prop:id} shows that the identification expression for $\psi (t)$ is given by
\begin{align}\label{eq:psi-id}
    \psi(t) =\sum_{z,w}S(t \mid x_y, z, w)\,P(w \mid x_w, z)\,P(z \mid x_z).
\end{align}
As before, we denote the observed data quantities $M=\min\{T,C\}$ and $\delta=\mathbbm{1}(T\le C)$. Due to independence, note that
\begin{align}
    S(t \mid X,Z,W)
    =
    \ex\!\left[
        \frac{\mathbbm{1}(M>t)}{G(t\mid X,Z,W)}
        \,\middle|\,
        X,Z,W
    \right],
\end{align}
where $G(t\mid X,Z,W)=P(C>t\mid X,Z,W)$ is the censoring survival function. 

\paragraph{Influence Function Derivation.}
We then compute the influence function of the identification expression in \cref{eq:psi-id}, which can be obtained as:
\begin{align}
    \ifc(\psi(t))
    &=
    \sum_{z,w}
    \underbrace{
        \ifc\!\left( S(t \mid x_y, z, w) \right)
        P(w \mid x_w, z) P(z \mid x_z)
    }_{T_1}
    \\
    &\quad+
    \underbrace{
        S(t \mid x_y, z, w)\,
        \ifc\!\left(P(w \mid x_w, z)\right)
        P(z \mid x_z)
    }_{T_2}
    \nonumber \\
    &\quad+
    \underbrace{
        S(t \mid x_y, z, w)\,
        P(w \mid x_w, z)\,
        \ifc\!\left(P(z \mid x_z)\right)
    }_{T_3},
\end{align}
using the fact that $\ifc (A B) = \ifc(A)\cdot B + A \cdot \ifc (B)$. Let $\xi_1(t)$ and $\xi_2(t)$ denote the augmentation terms defined in \cref{thm:dr-est}. The individual influence functions for each of the appearing terms can be obtained as:
\begin{align}
    \ifc\!\left(S(t \mid x_y, z, w)\right)
    &=
    \frac{\mathbbm{1}(X=x_y,Z=z,W=w)}{P(x_y,z,w)} \\
        &\quad \cdot \left[
        \frac{\mathbbm{1}(M>t)}{G(t\mid X,Z,W)}
        {+ \xi_1(t) - \xi_2(t)}
        -
        S(t\mid x_y,z,w)
    \right],
    \label{eq:if-surv-t1}
    \\
    \ifc\!\left(P(w \mid x_w, z)\right)
    &=
    \frac{\mathbbm{1}(X=x_w,Z=z)}{P(x_w,z)}
    \left[
        \mathbbm{1}(W=w)-P(w\mid x_w,z)
    \right],
    \\
    \ifc\!\left(P(z \mid x_z)\right)
    &=
    \frac{\mathbbm{1}(X=x_z)}{P(x_z)}
    \left[
        \mathbbm{1}(Z=z)-P(z\mid x_z)
    \right].
\end{align}

Substituting into $T_1$ gives
\begin{align} \label{eq:st-if}
    \hspace{-4pt}\frac{\mathbbm{1}(X=x_y)}{P(x_z)}
    \frac{P(x_z\mid Z)}{P(x_w\mid Z)}
    \frac{P(x_w\mid Z,W)}{P(x_y\mid Z,W)} 
    \left[
        \frac{\mathbbm{1}(M>t)}{G(t\mid X,Z,W)}
        {+ \xi_1(t) - \xi_2(t)}
        -
        S(t\mid X,Z,W)
    \right],
\end{align}
whereas substituting into $T_2, T_3$ yields
\begin{align}
    T_2
    &=
    \frac{\mathbbm{1}(X=x_w)}{P(x_z)}
    \frac{P(x_z\mid Z)}{P(x_w\mid Z)}
    \left[
        S(t\mid x_y,Z,W)
        -
        \ex[S(t\mid x_y,Z,W) \mid x_w,Z]
    \right],
    \\ \label{eq:term-3}
    T_3
    &=
    \frac{\mathbbm{1}(X=x_z)}{P(x_z)}
    \ex[S(t\mid x_y,Z,W) \mid x_w,Z]
    -
    \frac{\mathbbm{1}(X=x_z)}{P(x_z)}\,\psi(t).
\end{align}
Combining the terms yields the influence function stated in \cref{thm:dr-est}.

\paragraph{Double Robustness.}
Let $S, G$ denote the true survival and censoring functions, and let $\hat{S}, \hat{G}$ denote their estimates. Let the true censoring density be $f_C(u) = -G'(u)$ and the estimated censoring hazard be $\hat{h}_C(u) = \frac{-\hat{G}'(u)}{\hat{G}(u)}$. Because $T \ci C \mid X,Z,W$, the joint density of $(T,C)$ factors as $f_T(t)f_C(c)$ conditionally. For brevity, we suppress covariate conditioning. 
The term in the bracket in Eq.~\ref{eq:st-if} is $D(t) = \frac{\mathbbm{1}(M>t)}{\hat{G}(t)} + \xi_1(t) - \xi_2(t) - S(t)$, where:
\begin{align}
    \xi_1(t) = \hat{S}(t)\frac{\mathbbm{1}(M \le t, \delta=0)}{\hat{S}(M)\hat{G}(M)}, \quad \xi_2(t) = \hat{S}(t)\int_0^t \frac{\mathbbm{1}(M \ge u)}{\hat{S}(u)\hat{G}(u)} \hat{h}_C(u) du.
\end{align}
We want to show consistency $\ex[D(t)] = 0$ under misspecification of either $S$ or $G$. 

\emph{Case 1 ($\hat{G} = G$, $\hat{S}$ is arbitrary).} Because the estimated censoring model matches the true distribution, we have $\hat{G}(u) = G(u)$ and the estimated hazard is the true hazard, $\hat{h}_C(u) = \frac{-G'(u)}{G(u)}$. The expected value of the IPCW term is simply $\ex[ \frac{\mathbbm{1}(M>t)}{{G}(t)} ] = \frac{P(T>t, C>t)}{G(t)} = \frac{S(t)G(t)}{G(t)} = S(t)$.
The censoring event $\delta = 0$ in term $\xi_1(t)$ triggers only when $C \le t$ and $T > C$, so the term can be re-written as an integral:
\begin{align}
    \ex[\xi_1(t)] &= \hat{S}(t) \int_0^t \int_c^\infty \frac{1}{\hat{S}(c) G(c)} f_T(u) f_C(c) \,du \,dc \nonumber \\
    &= \hat{S}(t) \int_0^t \frac{P(T > c)}{\hat{S}(c) G(c)} f_C(c) \,dc = \hat{S}(t) \int_0^t \frac{S(c)}{\hat{S}(c) G(c)} (-G'(c)) \,dc. \label{eq:xi1-case1}
\end{align}
For the term $\xi_2(t)$, we have $\ex[\mathbbm{1}(M \ge u)] = P(T \ge u, C \ge u) = S(u)G(u)$. Thus:
\begin{align}
    \ex[\xi_2(t)] &= \hat{S}(t) \int_0^t \frac{\ex[\mathbbm{1}(M \ge u)]}{\hat{S}(u) G(u)} \left( \frac{-G'(u)}{G(u)} \right) du = \hat{S}(t) \int_0^t \frac{S(u)}{\hat{S}(u) G(u)} (-G'(u)) \,du. \label{eq:xi2-case1}
\end{align}
Comparing \cref{eq:xi1-case1} and \cref{eq:xi2-case1}, we see $\ex[\xi_1(t)] = \ex[\xi_2(t)]$, so their difference strictly evaluates to zero. Hence, $\ex[D(t)] = \ex[\frac{\mathbbm{1}(M>t)}{\hat{G}(t)}] - S(t) = 0$.

\emph{Case 2 ($\hat{S} = S$, $\hat{G}$ is arbitrary).} Here, the IPCW exibits a bias, namely $\ex[ \frac{\mathbbm{1}(M>t)}{\hat{G}(t)} ] = \frac{S(t)G(t)}{\hat{G}(t)} \neq S(t)$. 
Similarly to \cref{eq:xi1-case1,eq:xi2-case1}, we evaluate the expected augmentation terms, this time under the correct $S$ but misspecified $\hat{G}$:
\begin{align}
    \ex[\xi_1(t)] &= S(t) \int_0^t \frac{-G'(c)}{\hat{G}(c)} \,dc, \label{eq:xi1-case2} \\
    \ex[\xi_2(t)] &= S(t) \int_0^t \frac{S(u)G(u)}{S(u)\hat{G}(u)} \left( \frac{-\hat{G}'(u)}{\hat{G}(u)} \right) du = S(t) \int_0^t \frac{-G(u)\hat{G}'(u)}{\hat{G}(u)^2} \,du. \label{eq:xi2-case2}
\end{align}
Subtracting \cref{eq:xi2-case2} from \cref{eq:xi1-case2} results in an integral of a differentiated quotient $\frac{-G(u)}{\hat G(u)}$:
\begin{align}
    \ex[\xi_1(t) - \xi_2(t)] &= S(t) \int_0^t \left( \frac{-G'(u)\hat{G}(u) + G(u)\hat{G}'(u)}{\hat{G}(u)^2} \right) du = S(t) \int_0^t \frac{d}{du} \left( \frac{-G(u)}{\hat{G}(u)} \right) du \nonumber \\
    &= S(t) \left[ \frac{-G(t)}{\hat{G}(t)} - \frac{-G(0)}{\hat{G}(0)} \right] = S(t) - \frac{S(t)G(t)}{\hat{G}(t)},
\end{align}
using $G(0) = \hat{G}(0) = 1$. Finally, plugging this into $\ex[D(t)] = \frac{S(t)G(t)}{\hat{G}(t)} + S(t) - \frac{S(t)G(t)}{\hat{G}(t)} - S(t)= 0$ gives the desired result for the second case.

\subsection{Influence Function for the CIF}
Our next goal is to derive the influence function for the cumulative incidence functions used in \cref{sec:cr}. In this case, for cause $k$, the survival function $S(t\mid X, Z, W)$ is replaced by the cumulative incidence function
$\mathrm{CIF}_k(t\mid X, Z, W)=P(T_k\le t,\delta=k\mid X, Z, W)$. The identification expression for the CIF potential outcome $\ex[(\mathrm{CIF}_k)_{x_y, W_{x_w}} \mid X = x_z]$ is given by
\begin{align}\label{eq:cif-id}
    \psi(t) =\sum_{z,w}\mathrm{CIF}_k(t \mid x_y, z, w)\,P(w \mid x_w, z)\,P(z \mid x_z).
\end{align}
To obtain the influence function for the CIF identification expression, we proceed very similarly as in the survival function case.
Under $C \ci (T_1,\dots,T_K)\mid X,Z,W$ (see \cref{fig:sfm-cr}), we have that
\begin{align}
    \mathrm{CIF}_k(t\mid X,Z,W)
    =
    \ex\!\left[
        \frac{\mathbbm{1}(M\le t,\delta=k)}{G(M\mid X,Z,W)}
        \,\middle|\,
        X,Z,W
    \right].
\end{align}
Therefore, compared to terms $T_1, T_2, T_3$ in Eqs.~\ref{eq:st-if}-\ref{eq:term-3}, the term $\frac{\mathbbm{1}(M>t)}{G(t \mid X, Z, W)}$ is replaced by $\frac{\mathbbm{1}(M\le t,\delta=k)}{G(M \mid X, Z, W)}$, and $S(t)$ is replaced by $\mathrm{CIF}_k(t)$ throughout, yielding the following influence function
\begin{align}
    \ifc(\psi(t))
    &=
    \frac{\mathbbm{1}(X = x_y)}{P(x_z)}
    \frac{P(x_z \mid Z)}{P(x_w \mid Z)}
    \frac{P(x_w \mid Z,W)}{P(x_y \mid Z,W)}
    \left[
        \frac{\mathbbm{1}(M \le t,\, \delta = k)}{G(M \mid X,Z,W)}
        -
        \mathrm{CIF}_k(t \mid X,Z,W)
    \right]
    \\
    &\quad
    +
    \frac{\mathbbm{1}(X = x_w)}{P(x_z)}
    \frac{P(x_z \mid Z)}{P(x_w \mid Z)}
    \left[
        \mathrm{CIF}_k(t \mid x_y,Z,W)
        -
        \ex\!\left[
            \mathrm{CIF}_k(t \mid x_y,Z,W)
            \,\middle|\,
            X=x_w, Z
        \right]
    \right]
    \\
    &\quad
    +
    \frac{\mathbbm{1}(X = x_z)}{P(x_z)}
    \ex\!\left[
        \mathrm{CIF}_k(t \mid x_y,Z,W)
        \,\middle|\,
        X=x_w, Z
    \right]
    -
    \frac{\mathbbm{1}(X = x_z)}{P(x_z)}\,\psi(t).
\end{align}

\paragraph{Doubly Robust Estimation and Cross-Fitting.}
When performing estimation we use cross-fitting. Data is split into $K$ folds $\mathcal{D}_1, \dots, \mathcal{D}_K$.
Let $\widehat{\mathrm{IF}}^{-(k)}_{\psi(t)}(V_i)$ denote the influence function evaluated for the observed sample $V_i$ using nuisance
estimators fitted on the complement of fold $k$, denoted as $\mathcal{D}_{-k}$. Using one-step debiasing, the cross-fitted estimator for $\psi(t)$ is given by
\begin{align}
    \hat\psi(t)
    &=
    \frac{1}{n}\sum_{k=1}^K \sum_{i \in \mathcal{D}_k}
    \left\{
        \frac{\mathbbm{1}(X_i=x_z)}{\widehat P^{-(k)}(x_z)}\,\hat\psi^{-(k)}(t)
        + \widehat{\mathrm{IF}}^{-(k)}_{\psi(t)}(V_i)
    \right\},
    \label{eq:psi-crossfit}
\end{align}
where $\hat \psi^{-(k)}$ is the standard fold-specific plug-in estimator, given by
\begin{align}
    \hat\psi^{-(k)}(t)
    &=
    \frac{1}{|\mathcal{D}_{-k}|}
    \sum_{i \in \mathcal{D}_{-k}}
    \widehat f^{-(k)}(t\mid x_y,Z_i,W_i)
    \frac{\widehat P^{-(k)}(X_i=x_w\mid Z_i,W_i)}{\widehat P^{-(k)}(X_i=x_w\mid Z_i)}
    \frac{\widehat P^{-(k)}(X_i=x_z\mid Z_i)}{\widehat P^{-(k)}(X_i=x_z)},
    \label{eq:psi-plugin}
\end{align}
where $\widehat f$ is either the estimate of the survival function $S$ or the $\mathrm{CIF}_k$ as appropriate.
The estimator is doubly robust: it is consistent if either
$S(t\mid X,Z,W)$ and $G(t\mid X,Z,W)$ are consistently estimated, or the
propensity scores $P(X\mid Z)$ and $P(X\mid Z,W)$ are consistently estimated.
All nuisance functions are estimated using flexible machine learning methods,
with cross-fitting used to control overfitting bias, known as double machine learning \citep{chernozhukov2018double}. For estimating causal effects, differences of estimators of the form $\hat\psi(t)$ are used, and we obtain confidence intervals assuming asymptotic normality, with the asymptotic variance obtained from the variance of the estimated influence function \citep{chernozhukov2018double}.
\newpage
\section{Copula-Graphic Estimator} \label{appendix:copula-graphic}
In this appendix, we summarize the copula-graphic estimator (CGE) used in \cref{sec:ic} \citep{braekers2005copula}. We consider a single event of interest with time-to-event $T$ and a censoring time $C$ informative for $T$. Following the notation in \cref{sec:classic}, we have that
\begin{align}
    S(t) &= P(T>t), &
    G(t) &= P(C>t), & \\
    S_{\mathrm{all}}(t) &= P(T>t, C>t), &
    H(t,c) &= P(T > t, C > c).
\end{align}
Further, we write $\mathrm{CIF}_T(t)=P(T\leq t, \delta = 1)$ and $\mathrm{CIF}_C(t)=P(C\leq t,\delta =0)$ for the CIFs of the event and censoring, respectively. Note that $S_{\mathrm{all}}(t) = 1 - \mathrm{CIF}_T(t) - \mathrm{CIF}_C(t)$ by definition.
As described in \cref{sec:ic}, we assume an Archimedean copula with generator $\varphi$ (parameterized by Kendall's $\tau$ correlation coefficient) linking the joint survival and marginals as $H(t, c) = \mathcal{C}_\tau (S(t), G(c))$, implying the following relation:
\begin{align} \label{eq:arch-copula-gen}
    \varphi\!\left(S_{\mathrm{all}}(t)\right)
    =
    \varphi\!\left(S(t)\right)
    +
    \varphi\!\left(G(t)\right).
\end{align}
Here, we first describe the copula-graphic estimator in the standard setting.

\paragraph{Classical copula-graphic recursion.}
Let $0=t_0<t_1<\dots<t_m$ be a grid of time points and define the CIF increments
\begin{align}
    \Delta \mathrm{CIF}_{i,T} &= \mathrm{CIF}_T(t_i)-\mathrm{CIF}_T(t_{i-1}), &
    \Delta \mathrm{CIF}_{i,C} &= \mathrm{CIF}_C(t_i)-\mathrm{CIF}_C(t_{i-1}),
\end{align}
and the corresponding decrease of the joint survival
\begin{align}
    \Delta S_{\mathrm{all}}(t_i) = S_{\mathrm{all}}(t_{i-1}) - S_{\mathrm{all}}(t_i)
    =
    \Delta \mathrm{CIF}_{i,T} + \Delta \mathrm{CIF}_{i,C}.
    \label{eq:fall}
\end{align}
The classical CGE assumes that at each $t_i$ only one of the two increments is non-zero, $\Delta \mathrm{CIF}_{i,T}\cdot \Delta \mathrm{CIF}_{i,C} = 0$,
so that the decrease in \cref{eq:fall} can be attributed to exactly one cause at each step. Under this assumption, the recursion proceeds as follows. Suppose $\Delta \mathrm{CIF}_{i,T}>0$ and $\Delta \mathrm{CIF}_{i,C}=0$. Then the event occurs within $(t_{i-1},t_i]$ among those still in the joint risk set, and we have that
\begin{align}
    G(t_i) &= G(t_{i-1}), &
    S_{\mathrm{all}}(t_i) &= S_{\mathrm{all}}(t_{i-1}) - \Delta \mathrm{CIF}_{i,T}.
\end{align}
Plugging into \cref{eq:arch-copula-gen} yields an explicit update for $S(t_i)$:
\begin{align}
    \varphi^{-1}\!\Big(
        \varphi(S_{\mathrm{all}}(t_i)) - \varphi(G(t_{i-1}))
    \Big).
    \label{eq:rec-T}
\end{align}
Analogously, if $\Delta \mathrm{CIF}_{i,C}>0$ and $\Delta \mathrm{CIF}_{i,T}=0$, then $S(t_i)=S(t_{i-1})$ and
\begin{align} \label{eq:rec-C}
    G(t_i) = \varphi^{-1}\!\Big(\varphi(S_{\mathrm{all}}(t_i)) - \varphi(S(t_{i-1}))\Big).
\end{align}
Thus, given $\{S_{\mathrm{all}}(t_i), \mathrm{CIF}_T(t_i), \mathrm{CIF}_C(t_i)\}_{i=0}^m$ (which are estimable from the data even under informative censoring) and the single-jump assumption, the recursion in Eqs.~\ref{eq:rec-T}-\ref{eq:rec-C} identifies the marginal survivals $(S,G)$ for a fixed $\tau$.

\paragraph{Bounds when both increments may be non-zero.}
In our setting, $\mathrm{CIF}_T$ and $\mathrm{CIF}_C$ are available only on a fixed grid of estimates, and we cannot assume $\Delta \mathrm{CIF}_{i,T}\cdot \Delta \mathrm{CIF}_{i,C}=0$. Consequently, \cref{eq:arch-copula-gen} together with $S_{\mathrm{all}}(t_i)$ does not uniquely determine $(S(t_i),G(t_i))$ from $(S(t_{i-1}),G(t_{i-1}))$. \Cref{thm:cge-bounds} below gives sharp one-step lower and upper bounds implied by \cref{eq:arch-copula-gen}, generalizing the classical recursion of \cref{eq:rec-T,eq:rec-C} to the non-disjoint case. To state it, define the marginal decrease candidates
\begin{align}
    H(t_{i-1},t_i) &= S_{\mathrm{all}}(t_{i-1}) - \Delta \mathrm{CIF}_{i,C} = S_{\mathrm{all}}(t_i) + \Delta \mathrm{CIF}_{i,T}, \\
    H(t_i,t_{i-1}) &= S_{\mathrm{all}}(t_{i-1}) - \Delta \mathrm{CIF}_{i,T} = S_{\mathrm{all}}(t_{i})  + \Delta \mathrm{CIF}_{i,C}.
\end{align}
Intuitively, events $T,C$ \emph{compete} for individuals in the risk set in $(t_{i-1},t_i]$. The value $H(t_{i-1},t_i)$ corresponds to the configuration in which the censoring proportion in $(t_{i-1},t_i]$ would be unchanged in the absence of competition from $T$, yielding the upper bound for $G(t_i)$ and lower bound for $S(t_i)$. Conversely, $H(t_i,t_{i-1})$ holds the event proportion fixed in the absence of censoring, yielding the upper bound on $S(t_i)$ and lower bound on $G(t_i)$.

\begin{theorem-restate}[CGE bounds under non-disjoint increments]
\label{thm:cge-bounds-formal}
Let $\varphi$ be an Archimedean copula generator and assume $H(t,c)=\mathcal{C}_\tau(S(t),G(c))$ on $[t_{i-1},t_i]$. Given $(S(t_{i-1}),G(t_{i-1}))$ and the increments $\Delta\mathrm{CIF}_{i,T},\Delta\mathrm{CIF}_{i,C}\geq 0$ with $S_{\mathrm{all}}(t_i)=S_{\mathrm{all}}(t_{i-1})-\Delta\mathrm{CIF}_{i,T}-\Delta\mathrm{CIF}_{i,C}$, the marginals at $t_i$ satisfy $S(t_i)\in[\underline{S}(t_i),\overline{S}(t_i)]$ and $G(t_i)\in[\underline{G}(t_i),\overline{G}(t_i)]$, where
\begin{align}
\overline{G}(t_i) &= \varphi^{-1}\!\big(\varphi(H(t_{i-1},t_i))-\varphi(S(t_{i-1}))\big),\\
\underline{S}(t_i) &= \varphi^{-1}\!\big(\varphi(S_{\mathrm{all}}(t_i)) -\varphi(\overline{G}(t_i)) \big),\\
\overline{S}(t_i) &= \varphi^{-1}\!\big(\varphi(H(t_i,t_{i-1}))-\varphi(G(t_{i-1}))\big),\\
\underline{G}(t_i) &= \varphi^{-1}\!\big(\varphi(S_{\mathrm{all}}(t_i))-\varphi(\overline{S}(t_i)) \big),
\end{align}
with $H(t_{i-1},t_i)=S_{\mathrm{all}}(t_{i-1})-\Delta\mathrm{CIF}_{i,C}$ and $H(t_i,t_{i-1})=S_{\mathrm{all}}(t_{i-1})-\Delta\mathrm{CIF}_{i,T}$. The bounds are sharp, and as $\max_i|t_i-t_{i-1}|\to 0$ they collapse, point-identifying $(S,G)$ for fixed $\tau$.
\end{theorem-restate}

\begin{proof}
By \cref{eq:arch-copula-gen}, $\varphi(H(t,c))=\varphi(S(t))+\varphi(G(c))$ for all $(t,c)$ in the interval. Evaluating at $(t_{i-1},t_i)$ gives
\[
\varphi(G(t_i)) = \varphi(H(t_{i-1},t_i))-\varphi(S(t_{i-1})),
\]
where $G(t_i)$ is maximized when $H(t_{i-1},t_i)$ takes its largest admissible value, namely $S_{\mathrm{all}}(t_{i-1})-\Delta\mathrm{CIF}_{i,C}$ (corresponding to attributing the full censoring increment to $C$ alone in $(t_{i-1},t_i]$, while ascribing none of the joint decrease to $T$ over and above what is forced). This yields $\overline{G}(t_i)$ (since $\varphi$ is strictly decreasing). 
The companion lower bound $\underline{S}(t_i)$ follows by enforcing the additive identity $\varphi(S_{\mathrm{all}}(t_i))=\varphi(S(t_i))+\varphi(G(t_i))$ and substituting $\overline{G}(t_i)$. The pair $(\overline{S}(t_i),\underline{G}(t_i))$ is obtained by the symmetric argument starting from $H(t_i,t_{i-1})$. For sharpness, we note each extremum is attained by the limiting configuration in which one cause's increment is realized first within $(t_{i-1},t_i]$, recovering the classical single-jump recursion of \cref{eq:rec-T,eq:rec-C}. 
For point collapse, observe that when $\Delta\mathrm{CIF}_{i,T} = 0$, update (ii) yields $\overline{S}(t_i) = S(t_{i-1})$ via the additive identity $\varphi(S_{\mathrm{all}}(t_{i-1})) = \varphi(S(t_{i-1})) + \varphi(G(t_{i-1}))$, and update (i) then determines $G(t_i)$ uniquely; the bounds coincide and reduce to the classical recursion of \cref{eq:rec-T,eq:rec-C}. The symmetric argument applies when $\Delta\mathrm{CIF}_{i,C} = 0$. Hence under, any grid refinement finer than the minimum gap between jump times of $\mathrm{CIF}_T$ and $\mathrm{CIF}_C$ (pairwise distinct when $T, C$ are continuously distributed), each interval contains at most one increment, and the bounds collapse.
\end{proof}

In practice, from these bounds we extract the midpoints as our estimates
\begin{align}
    \hat S(t_i) &= \tfrac12\big(\underline{S}(t_i)+\overline{S}(t_i)\big),
    &
    \hat G(t_i) &= \tfrac12\big(\underline{G}(t_i)+\overline{G}(t_i)\big),
\end{align}
and iterate forward in $i$.

\subsection{Uncertainty Envelope} \label{appendix:envelopes}
The above description provides the copula-graphic mapping from cumulative incidence functions to the marginal survival curves.
In the case where uncertainty quantification is available over the CIFs (e.g., when doubly robust estimation is performed), the key challenge is to propagate the uncertainty in the CIFs into the marginal survival curve uncertainty. We now discuss the method for doing so.

\paragraph{Corner evaluation.}
A natural first approximation is to evaluate the copula-graphic estimator at
the four extremal combinations of lower and upper CIF $(1-\alpha)$-confidence intervals,
\[
(\underline{\text{CIF}}_T,\underline{\text{CIF}}_C),\quad
(\underline{\text{CIF}}_T,\overline{\text{CIF}}_C),\quad
(\overline{\text{CIF}}_T,\underline{\text{CIF}}_C),\quad
(\overline{\text{CIF}}_T,\overline{\text{CIF}}_C),
\]
and to take the pointwise minima and maxima of the resulting survival curves
as an uncertainty envelope. While simple and inexpensive, this approach does not provide a  formal guarantee of capturing the true extrema, since the copula-graphic map need not be jointly monotone in the CIF arguments.

\paragraph{Sampling-based envelope.}
To obtain a more reliable envelope, we augment the corner evaluation with a
sampling step over the admissible CIF region circumscribed by its $(1-\alpha)$-confidence interval. After estimating lower and upper bounds for $\text{CIF}_T$ and $\text{CIF}_C$, we repeatedly draw CIF trajectories uniformly within these bands on the time grid, and ensure each draw is a valid CIF by enforcing
(i) non-negativity,
(ii) non-negative increments, and
(iii) the pointwise constraint $\text{CIF}_T(t)+\text{CIF}_C(t)\le 1$.
For each sampled pair, we apply the copula-graphic recursion to obtain a
candidate survival curve. The final uncertainty envelope is then defined by the
pointwise minima and maxima over the survival curves obtained from the four
corners together with the sampled trajectories, and these confidence intervals are reported in practice (see \cref{fig:readm-effects}).
\newpage
\section{ANZICS Acknowledgement} \label{appendix:anzics-ack}
The authors acknowledge the Australian and New Zealand Intensive Care Society (ANZICS) Centre for Outcomes and Resource Evaluation (CORE) for providing the data used in the current study. The authors and the management committee of ANZICS CORE would like to thank clinicians, data collectors and researchers at the following contributing sites:
\begin{longtable}{|c|}
    \hline
    \endfirsthead

    \hline
    \endhead

    \hline
    \endfoot

    \hline
    \endlastfoot

    Albury Wodonga Health ICU \\
    \hline
    Alfred Hospital ICU \\
    \hline
    Alice Springs Hospital ICU \\
    \hline
    Allamanda Private Hospital ICU \\
    \hline
    Angliss Hospital ICU \\
    \hline
    Armadale Health Service ICU \\
    \hline
    Ashford Community Hospital ICU \\
    \hline
    Auckland City Hospital CV ICU \\
    \hline
    Auckland City Hospital DCCM \\
    \hline
    Austin Hospital ICU \\
    \hline
    Ballarat Health Services ICU \\
    \hline
    Bankstown-Lidcombe Hospital ICU \\
    \hline
    Bathurst Base Hospital ICU \\
    \hline
    Bendigo Health Care Group ICU \\
    \hline
    Blacktown Hospital ICU \\
    \hline
    Bowral Hospital HDU \\
    \hline
    Box Hill Hospital ICU \\
    \hline
    Braemar Hospital SCU \\
    \hline
    Brisbane Private Hospital ICU \\
    \hline
    Brisbane Waters Private Hospital ICU \\
    \hline
    Broken Hill Base Hospital \& Health Services ICU \\
    \hline
    Buderim Private Hospital ICU \\
    \hline
    Bunbury Regional Hospital ICU \\
    \hline
    Bundaberg Base Hospital ICU \\
    \hline
    Caboolture Hospital ICU \\
    \hline
    Cabrini Hospital ICU \\
    \hline
    Cairns Hospital ICU \\
    \hline
    Calvary Adelaide Hospital ICU \\
    \hline
    Calvary Bruce Private Hospital HDU \\
    \hline
    Calvary Hospital (Canberra) ICU \\
    \hline
    Calvary Hospital (Lenah Valley) ICU \\
    \hline
    Calvary John James Hospital ICU \\
    \hline
    Calvary Mater Newcastle ICU \\
    \hline
    Calvary North Adelaide Hospital ICU \\
    \hline
    Campbelltown Hospital ICU \\
    \hline
    Canberra Hospital ICU \\
    \hline
    Casey Hospital ICU \\
    \hline
    Central Gippsland Health Service (Sale) ICU \\
    \hline
    Christchurch Hospital ICU \\
    \hline
    Coffs Harbour Health Campus ICU \\
    \hline
    Concord Hospital (Sydney) ICU \\
    \hline
    Dandenong Hospital ICU \\
    \hline
    Dubbo Base Hospital ICU \\
    \hline
    Dunedin Hospital ICU \\
    \hline
    Echuca Regional Hospital HDU \\
    \hline
    Epworth Eastern Private Hospital ICU \\
    \hline
    Epworth Freemasons Hospital ICU \\
    \hline
    Epworth Geelong ICU \\
    \hline
    Epworth Hospital (Richmond) ICU \\
    \hline
    Fairfield Hospital ICU \\
    \hline
    Fiona Stanley Hospital ICU \\
    \hline
    Flinders Medical Centre ICU \\
    \hline
    Flinders Private Hospital ICU \\
    \hline
    Footscray Hospital ICU \\
    \hline
    Frankston Hospital ICU \\
    \hline
    Fremantle Hospital ICU \\
    \hline
    Gold Coast Private Hospital ICU \\
    \hline
    Gold Coast University Hospital ICU \\
    \hline
    Gosford Hospital ICU \\
    \hline
    Gosford Private Hospital ICU \\
    \hline
    Goulburn Base Hospital ICU \\
    \hline
    Goulburn Valley Health ICU \\
    \hline
    Grafton Base Hospital ICU \\
    \hline
    Greenslopes Private Hospital ICU \\
    \hline
    Griffith Base Hospital ICU \\
    \hline
    Hawkes Bay Hospital ICU \\
    \hline
    Hervey Bay Hospital ICU \\
    \hline
    Hollywood Private Hospital ICU \\
    \hline
    Holmesglen Private Hospital ICU \\
    \hline
    Hornsby Ku-ring-gai Hospital ICU \\
    \hline
    Hurstville Private Hospital ICU \\
    \hline
    Hutt Hospital ICU \\
    \hline
    Ipswich Hospital ICU \\
    \hline
    John Fawkner Hospital ICU \\
    \hline
    John Hunter Hospital ICU \\
    \hline
    Joondalup Health Campus ICU \\
    \hline
    Kareena Private Hospital ICU \\
    \hline
    Knox Private Hospital ICU \\
    \hline
    Latrobe Regional Hospital ICU \\
    \hline
    Launceston General Hospital ICU \\
    \hline
    Lingard Private Hospital ICU \\
    \hline
    Lismore Base Hospital ICU \\
    \hline
    Liverpool Hospital ICU \\
    \hline
    Logan Hospital ICU \\
    \hline
    Lyell McEwin Hospital ICU \\
    \hline
    Mackay Base Hospital ICU \\
    \hline
    Macquarie University Private Hospital ICU \\
    \hline
    Maitland Hospital ICU \\
    \hline
    Maitland Private Hospital ICU \\
    \hline
    Manly Hospital \& Community Health ICU \\
    \hline
    Manning Rural Referral Hospital ICU \\
    \hline
    Maroondah Hospital ICU \\
    \hline
    Mater Adults Hospital (Brisbane) ICU \\
    \hline
    Mater Health Services North Queensland ICU \\
    \hline
    Mater Private Hospital (Brisbane) ICU \\
    \hline
    Mater Private Hospital (Sydney) ICU \\
    \hline
    Melbourne Private Hospital ICU \\
    \hline
    Middlemore Hospital ICU \\
    \hline
    Mildura Base Public Hospital ICU \\
    \hline
    Modbury Public Hospital ICU \\
    \hline
    Monash Medical Centre-Clayton Campus ICU \\
    \hline
    Mount Hospital ICU \\
    \hline
    Mount Isa Hospital ICU \\
    \hline
    Mulgrave Private Hospital ICU \\
    \hline
    Nambour General Hospital ICU \\
    \hline
    National Capital Private Hospital ICU \\
    \hline
    Nelson Hospital ICU \\
    \hline
    Nepean Hospital ICU \\
    \hline
    Nepean Private Hospital ICU \\
    \hline
    Newcastle Private Hospital ICU \\
    \hline
    Noosa Hospital ICU \\
    \hline
    North Shore Hospital ICU \\
    \hline
    North Shore Private Hospital ICU \\
    \hline
    North West Private Hospital ICU \\
    \hline
    North West Regional Hospital (Burnie) ICU \\
    \hline
    Northeast Health Wangaratta ICU \\
    \hline
    Northern Beaches Hospital ICU \\
    \hline
    Norwest Private Hospital ICU \\
    \hline
    Orange Base Hospital ICU \\
    \hline
    Peninsula Private Hospital ICU \\
    \hline
    Peter MacCallum Cancer Institute ICU \\
    \hline
    Pindara Private Hospital ICU \\
    \hline
    Port Macquarie Base Hospital ICU \\
    \hline
    Prince of Wales Hospital (Hong Kong) ICU \\
    \hline
    Prince of Wales Hospital (Sydney) ICU \\
    \hline
    Prince of Wales Private Hospital (Sydney) ICU \\
    \hline
    Princess Alexandra Hospital ICU \\
    \hline
    Queen Elizabeth II Jubilee Hospital ICU \\
    \hline
    Redcliffe Hospital ICU \\
    \hline
    Robina Hospital ICU \\
    \hline
    Rockhampton Hospital ICU \\
    \hline
    Rockingham General Hospital ICU \\
    \hline
    Rotorua Hospital ICU \\
    \hline
    Royal Adelaide Hospital ICU \\
    \hline
    Royal Brisbane and Women's Hospital ICU \\
    \hline
    Royal Darwin Hospital ICU \\
    \hline
    Royal Hobart Hospital ICU \\
    \hline
    Royal Melbourne Hospital ICU \\
    \hline
    Royal North Shore Hospital ICU \\
    \hline
    Royal Perth Hospital ICU \\
    \hline
    Royal Prince Alfred Hospital ICU \\
    \hline
    Ryde Hospital and Community Health Services ICU \\
    \hline
    Shoalhaven Hospital ICU \\
    \hline
    Sir Charles Gairdner Hospital ICU \\
    \hline
    South East Regional Hospital ICU \\
    \hline
    South West Healthcare (Warrnambool) ICU \\
    \hline
    Southern Cross Hospital (Hamilton) ICU \\
    \hline
    Southern Cross Hospital (Wellington) ICU \\
    \hline
    St Andrew's Hospital Toowoomba ICU \\
    \hline
    St Andrew's Private Hospital (Ipswich) ICU \\
    \hline
    St Andrew's War Memorial Hospital ICU \\
    \hline
    St George Hospital (Sydney) CICU \\
    \hline
    St George Hospital (Sydney) ICU \\
    \hline
    St George Hospital (Sydney) ICU2 \\
    \hline
    St George Private Hospital (Sydney) ICU \\
    \hline
    St John Of God Health Care (Subiaco) ICU \\
    \hline
    St John Of God Hospital (Ballarat) ICU \\
    \hline
    St John of God Hospital (Bendigo) ICU \\
    \hline
    St John of God Hospital (Berwick) ICU \\
    \hline
    St John Of God Hospital (Geelong) ICU \\
    \hline
    St John of God Midland Public \& Private ICU \\
    \hline
    St Vincent’s Private Hospital Northside ICU \\
    \hline
    St Vincent's Hospital (Melbourne) ICU \\
    \hline
    St Vincent's Hospital (Sydney) ICU \\
    \hline
    St Vincent's Hospital (Toowoomba) ICU \\
    \hline
    St Vincent's Private Hospital (Sydney) ICU \\
    \hline
    St Vincent's Private Hospital Fitzroy ICU \\
    \hline
    Sunnybank Hospital ICU \\
    \hline
    Sunshine Coast University Hospital ICU \\
    \hline
    Sunshine Coast University Private Hospital ICU \\
    \hline
    Sunshine Hospital ICU \\
    \hline
    Sutherland Hospital \& Community Health Services ICU \\
    \hline
    Sydney Adventist Hospital ICU \\
    \hline
    Tamworth Base Hospital ICU \\
    \hline
    Taranaki Health ICU \\
    \hline
    Tauranga Hospital ICU \\
    \hline
    The Bays Hospital ICU \\
    \hline
    The Chris O’Brien Lifehouse ICU \\
    \hline
    The Memorial Hospital (Adelaide) ICU \\
    \hline
    The Northern Hospital ICU \\
    \hline
    The Prince Charles Hospital ICU \\
    \hline
    The Queen Elizabeth (Adelaide) ICU \\
    \hline
    The Wesley Hospital ICU \\
    \hline
    Timaru Hospital ICU \\
    \hline
    Toowoomba Hospital ICU \\
    \hline
    Townsville University Hospital ICU \\
    \hline
    Tweed Heads District Hospital ICU \\
    \hline
    University Hospital Geelong ICU \\
    \hline
    Wagga Wagga Base Hospital \& District Health ICU \\
    \hline
    Waikato Hospital ICU \\
    \hline
    Wairau Hospital ICU \\
    \hline
    Warringal Private Hospital ICU \\
    \hline
    Wellington Hospital ICU \\
    \hline
    Werribee Mercy Hospital ICU \\
    \hline
    Western District Health Service (Hamilton) ICU \\
    \hline
    Western Hospital (SA) ICU \\
    \hline
    Western Private Hospital ICU \\
    \hline
    Westmead Hospital ICU \\
    \hline
    Westmead Private Hospital ICU \\
    \hline
    Whangarei Area Hospital - Northland Health Ltd ICU \\
    \hline
    Wimmera Health Care Group (Horsham) ICU \\
    \hline
    Wollongong Hospital ICU \\
    \hline
    Women's and Children's Hospital PICU \\
    \hline
    Wyong Hospital ICU \\
\end{longtable}

\ifnum\treportflag=0
    \newpage
    \section*{NeurIPS Paper Checklist}

\begin{enumerate}

\item {\bf Claims}
    \item[] Question: Do the main claims made in the abstract and introduction accurately reflect the paper's contributions and scope?
    \item[] Answer: \answerYes{}
    \item[] Justification: The four contributions listed in the introduction are each substantiated: the Causal Reduction Theorem (\cref{thm:causal-reduction}) and framework instantiation under non-informative censoring, competing risks, and informative censoring (\cref{sec:classic,sec:cr,sec:ic}); identification results (\cref{prop:id}); two new technical results, for doubly-robust estimation (\cref{thm:dr-est}) and sharp copula-graphic bounds (\cref{thm:cge-bounds}); and the empirical case study on post-ICU racial disparities (\cref{sec:experiments}).
    \item[] Guidelines:
    \begin{itemize}
        \item The answer \answerNA{} means that the abstract and introduction do not include the claims made in the paper.
        \item The abstract and/or introduction should clearly state the claims made, including the contributions made in the paper and important assumptions and limitations. A \answerNo{} or \answerNA{} answer to this question will not be perceived well by the reviewers. 
        \item The claims made should match theoretical and experimental results, and reflect how much the results can be expected to generalize to other settings. 
        \item It is fine to include aspirational goals as motivation as long as it is clear that these goals are not attained by the paper. 
    \end{itemize}

\item {\bf Limitations}
    \item[] Question: Does the paper discuss the limitations of the work performed by the authors?
    \item[] Answer: \answerYes{}
    \item[] Justification: 
    The paper contains an explicit Limitations section at the end, acknowledging the strength of the causal assumptions used, and the need to justify such assumptions on a case-specific basis. We further identify sensitivity analysis for such unobserved confounding in mediation settings as an important direction for future work.
    Limitations are also discussed within the case study section, where we acknowledge potential sources of unmeasured confounding (genetic variation, geographic remoteness).
    \item[] Guidelines:
    \begin{itemize}
        \item The answer \answerNA{} means that the paper has no limitation while the answer \answerNo{} means that the paper has limitations, but those are not discussed in the paper. 
        \item The authors are encouraged to create a separate ``Limitations'' section in their paper.
        \item The paper should point out any strong assumptions and how robust the results are to violations of these assumptions (e.g., independence assumptions, noiseless settings, model well-specification, asymptotic approximations only holding locally). The authors should reflect on how these assumptions might be violated in practice and what the implications would be.
        \item The authors should reflect on the scope of the claims made, e.g., if the approach was only tested on a few datasets or with a few runs. In general, empirical results often depend on implicit assumptions, which should be articulated.
        \item The authors should reflect on the factors that influence the performance of the approach. For example, a facial recognition algorithm may perform poorly when image resolution is low or images are taken in low lighting. Or a speech-to-text system might not be used reliably to provide closed captions for online lectures because it fails to handle technical jargon.
        \item The authors should discuss the computational efficiency of the proposed algorithms and how they scale with dataset size.
        \item If applicable, the authors should discuss possible limitations of their approach to address problems of privacy and fairness.
        \item While the authors might fear that complete honesty about limitations might be used by reviewers as grounds for rejection, a worse outcome might be that reviewers discover limitations that aren't acknowledged in the paper. The authors should use their best judgment and recognize that individual actions in favor of transparency play an important role in developing norms that preserve the integrity of the community. Reviewers will be specifically instructed to not penalize honesty concerning limitations.
    \end{itemize}

\item {\bf Theory assumptions and proofs}
    \item[] Question: For each theoretical result, does the paper provide the full set of assumptions and a complete (and correct) proof?
    \item[] Answer: \answerYes{}
    \item[] Justification: All assumptions are stated alongside the theoretical results. Proofs for \cref{thm:causal-reduction}, \cref{prop:id}, and \cref{prop:model-est} are provided in \cref{appendix:proofs}. The proof of \cref{thm:dr-est} is provided in \cref{appendix:dr}, and the proof of \cref{thm:cge-bounds} is provided in \cref{appendix:copula-graphic}.
    \item[] Guidelines:
    \begin{itemize}
        \item The answer \answerNA{} means that the paper does not include theoretical results. 
        \item All the theorems, formulas, and proofs in the paper should be numbered and cross-referenced.
        \item All assumptions should be clearly stated or referenced in the statement of any theorems.
        \item The proofs can either appear in the main paper or the supplemental material, but if they appear in the supplemental material, the authors are encouraged to provide a short proof sketch to provide intuition. 
        \item Inversely, any informal proof provided in the core of the paper should be complemented by formal proofs provided in appendix or supplemental material.
        \item Theorems and Lemmas that the proof relies upon should be properly referenced. 
    \end{itemize}

    \item {\bf Experimental result reproducibility}
    \item[] Question: Does the paper fully disclose all the information needed to reproduce the main experimental results of the paper to the extent that it affects the main claims and/or conclusions of the paper (regardless of whether the code and data are provided or not)?
    \item[] Answer: \answerYes{}
    \item[] Justification: All data sources (ANZICS APD), variable definitions, estimation procedures (random survival forests, xgboost-based propensity estimation, cross-fitting), and sensitivity analysis parameters are described in the main text. Full code is provided in an anonymized repository.
    \item[] Guidelines:
    \begin{itemize}
        \item The answer \answerNA{} means that the paper does not include experiments.
        \item If the paper includes experiments, a \answerNo{} answer to this question will not be perceived well by the reviewers: Making the paper reproducible is important, regardless of whether the code and data are provided or not.
        \item If the contribution is a dataset and\slash or model, the authors should describe the steps taken to make their results reproducible or verifiable. 
        \item Depending on the contribution, reproducibility can be accomplished in various ways. For example, if the contribution is a novel architecture, describing the architecture fully might suffice, or if the contribution is a specific model and empirical evaluation, it may be necessary to either make it possible for others to replicate the model with the same dataset, or provide access to the model. In general. releasing code and data is often one good way to accomplish this, but reproducibility can also be provided via detailed instructions for how to replicate the results, access to a hosted model (e.g., in the case of a large language model), releasing of a model checkpoint, or other means that are appropriate to the research performed.
        \item While NeurIPS does not require releasing code, the conference does require all submissions to provide some reasonable avenue for reproducibility, which may depend on the nature of the contribution. For example
        \begin{enumerate}
            \item If the contribution is primarily a new algorithm, the paper should make it clear how to reproduce that algorithm.
            \item If the contribution is primarily a new model architecture, the paper should describe the architecture clearly and fully.
            \item If the contribution is a new model (e.g., a large language model), then there should either be a way to access this model for reproducing the results or a way to reproduce the model (e.g., with an open-source dataset or instructions for how to construct the dataset).
            \item We recognize that reproducibility may be tricky in some cases, in which case authors are welcome to describe the particular way they provide for reproducibility. In the case of closed-source models, it may be that access to the model is limited in some way (e.g., to registered users), but it should be possible for other researchers to have some path to reproducing or verifying the results.
        \end{enumerate}
    \end{itemize}

\item {\bf Open access to data and code}
    \item[] Question: Does the paper provide open access to the data and code, with sufficient instructions to faithfully reproduce the main experimental results, as described in supplemental material?
    \item[] Answer: \answerYes{}
    \item[] Justification: The code is publicly available in an anonymized repository. The ANZICS APD dataset is available to researchers upon institutional application; the paper describes all preprocessing steps and variable definitions needed to reproduce the analyses.
    \item[] Guidelines:
    \begin{itemize}
        \item The answer \answerNA{} means that paper does not include experiments requiring code.
        \item Please see the NeurIPS code and data submission guidelines (\url{https://neurips.cc/public/guides/CodeSubmissionPolicy}) for more details.
        \item While we encourage the release of code and data, we understand that this might not be possible, so \answerNo{} is an acceptable answer. Papers cannot be rejected simply for not including code, unless this is central to the contribution (e.g., for a new open-source benchmark).
        \item The instructions should contain the exact command and environment needed to run to reproduce the results. See the NeurIPS code and data submission guidelines (\url{https://neurips.cc/public/guides/CodeSubmissionPolicy}) for more details.
        \item The authors should provide instructions on data access and preparation, including how to access the raw data, preprocessed data, intermediate data, and generated data, etc.
        \item The authors should provide scripts to reproduce all experimental results for the new proposed method and baselines. If only a subset of experiments are reproducible, they should state which ones are omitted from the script and why.
        \item At submission time, to preserve anonymity, the authors should release anonymized versions (if applicable).
        \item Providing as much information as possible in supplemental material (appended to the paper) is recommended, but including URLs to data and code is permitted.
    \end{itemize}

\item {\bf Experimental setting/details}
    \item[] Question: Does the paper specify all the training and test details (e.g., data splits, hyperparameters, how they were chosen, type of optimizer) necessary to understand the results?
    \item[] Answer: \answerYes{}
    \item[] Justification: All experimental details (ANZICS APD data, variable groupings, random survival forest estimation, xgboost propensity models, cross-fitting procedure, and sensitivity parameter choices for the informative censoring analysis) are described in the main text.
    \item[] Guidelines:
    \begin{itemize}
        \item The answer \answerNA{} means that the paper does not include experiments.
        \item The experimental setting should be presented in the core of the paper to a level of detail that is necessary to appreciate the results and make sense of them.
        \item The full details can be provided either with the code, in appendix, or as supplemental material.
    \end{itemize}

\item {\bf Experiment statistical significance}
    \item[] Question: Does the paper report error bars suitably and correctly defined or other appropriate information about the statistical significance of the experiments?
    \item[] Answer: \answerYes{}
    \item[] Justification: 95\% confidence intervals are reported for all causal effect estimates, obtained under asymptotic normality of the cross-fitted doubly-robust estimator. For the sensitivity analysis under informative censoring, maximal and minimal 95\% confidence intervals across sensitivity parameter values are displayed.
    \item[] Guidelines:
    \begin{itemize}
        \item The answer \answerNA{} means that the paper does not include experiments.
        \item The authors should answer \answerYes{} if the results are accompanied by error bars, confidence intervals, or statistical significance tests, at least for the experiments that support the main claims of the paper.
        \item The factors of variability that the error bars are capturing should be clearly stated (for example, train/test split, initialization, random drawing of some parameter, or overall run with given experimental conditions).
        \item The method for calculating the error bars should be explained (closed form formula, call to a library function, bootstrap, etc.)
        \item The assumptions made should be given (e.g., Normally distributed errors).
        \item It should be clear whether the error bar is the standard deviation or the standard error of the mean.
        \item It is OK to report 1-sigma error bars, but one should state it. The authors should preferably report a 2-sigma error bar than state that they have a 96\% CI, if the hypothesis of Normality of errors is not verified.
        \item For asymmetric distributions, the authors should be careful not to show in tables or figures symmetric error bars that would yield results that are out of range (e.g., negative error rates).
        \item If error bars are reported in tables or plots, the authors should explain in the text how they were calculated and reference the corresponding figures or tables in the text.
    \end{itemize}

\item {\bf Experiments compute resources}
    \item[] Question: For each experiment, does the paper provide sufficient information on the computer resources (type of compute workers, memory, time of execution) needed to reproduce the experiments?
    \item[] Answer: \answerYes{}
    \item[] Justification: Information is provided at the start of the supplementary material. All experiments were run on a MacBook Pro with M3 Pro chip and 36\,GB RAM (macOS 26.2), with total compute under 24 hours.
    \item[] Guidelines:
    \begin{itemize}
        \item The answer \answerNA{} means that the paper does not include experiments.
        \item The paper should indicate the type of compute workers CPU or GPU, internal cluster, or cloud provider, including relevant memory and storage.
        \item The paper should provide the amount of compute required for each of the individual experimental runs as well as estimate the total compute. 
        \item The paper should disclose whether the full research project required more compute than the experiments reported in the paper (e.g., preliminary or failed experiments that didn't make it into the paper). 
    \end{itemize}
    
\item {\bf Code of ethics}
    \item[] Question: Does the research conducted in the paper conform, in every respect, with the NeurIPS Code of Ethics \url{https://neurips.cc/public/EthicsGuidelines}?
    \item[] Answer: \answerYes{}
    \item[] Justification: The research complies with the NeurIPS Code of Ethics.
    \item[] Guidelines:
    \begin{itemize}
        \item The answer \answerNA{} means that the authors have not reviewed the NeurIPS Code of Ethics.
        \item If the authors answer \answerNo, they should explain the special circumstances that require a deviation from the Code of Ethics.
        \item The authors should make sure to preserve anonymity (e.g., if there is a special consideration due to laws or regulations in their jurisdiction).
    \end{itemize}

\item {\bf Broader impacts}
    \item[] Question: Does the paper discuss both potential positive societal impacts and negative societal impacts of the work performed?
    \item[] Answer: \answerYes{}
    \item[] Justification: A Broader Impacts Statement is included at the end of the main paper. It highlights the positive impact of enabling more principled analyses of disparities in survival settings, contributing to identifying and reducing unfair outcomes in applied domains. We do not foresee direct ethical risks or potential harms from this work.
    \item[] Guidelines:
    \begin{itemize}
        \item The answer \answerNA{} means that there is no societal impact of the work performed.
        \item If the authors answer \answerNA{} or \answerNo, they should explain why their work has no societal impact or why the paper does not address societal impact.
        \item Examples of negative societal impacts include potential malicious or unintended uses (e.g., disinformation, generating fake profiles, surveillance), fairness considerations (e.g., deployment of technologies that could make decisions that unfairly impact specific groups), privacy considerations, and security considerations.
        \item The conference expects that many papers will be foundational research and not tied to particular applications, let alone deployments. However, if there is a direct path to any negative applications, the authors should point it out. For example, it is legitimate to point out that an improvement in the quality of generative models could be used to generate Deepfakes for disinformation. On the other hand, it is not needed to point out that a generic algorithm for optimizing neural networks could enable people to train models that generate Deepfakes faster.
        \item The authors should consider possible harms that could arise when the technology is being used as intended and functioning correctly, harms that could arise when the technology is being used as intended but gives incorrect results, and harms following from (intentional or unintentional) misuse of the technology.
        \item If there are negative societal impacts, the authors could also discuss possible mitigation strategies (e.g., gated release of models, providing defenses in addition to attacks, mechanisms for monitoring misuse, mechanisms to monitor how a system learns from feedback over time, improving the efficiency and accessibility of ML).
    \end{itemize}
    
\item {\bf Safeguards}
    \item[] Question: Does the paper describe safeguards that have been put in place for responsible release of data or models that have a high risk for misuse (e.g., pre-trained language models, image generators, or scraped datasets)?
    \item[] Answer: \answerNA{}
    \item[] Justification: The paper introduces a bias detection methodology for existing models. It does not release new models or datasets, posing no such risks.
    \item[] Guidelines:
    \begin{itemize}
        \item The answer \answerNA{} means that the paper poses no such risks.
        \item Released models that have a high risk for misuse or dual-use should be released with necessary safeguards to allow for controlled use of the model, for example by requiring that users adhere to usage guidelines or restrictions to access the model or implementing safety filters. 
        \item Datasets that have been scraped from the Internet could pose safety risks. The authors should describe how they avoided releasing unsafe images.
        \item We recognize that providing effective safeguards is challenging, and many papers do not require this, but we encourage authors to take this into account and make a best faith effort.
    \end{itemize}

\item {\bf Licenses for existing assets}
    \item[] Question: Are the creators or original owners of assets (e.g., code, data, models), used in the paper, properly credited and are the license and terms of use explicitly mentioned and properly respected?
    \item[] Answer: \answerYes{}
    \item[] Justification: All assets used are properly cited; all are publicly available under open or permissive licenses.
    \item[] Guidelines:
    \begin{itemize}
        \item The answer \answerNA{} means that the paper does not use existing assets.
        \item The authors should cite the original paper that produced the code package or dataset.
        \item The authors should state which version of the asset is used and, if possible, include a URL.
        \item The name of the license (e.g., CC-BY 4.0) should be included for each asset.
        \item For scraped data from a particular source (e.g., website), the copyright and terms of service of that source should be provided.
        \item If assets are released, the license, copyright information, and terms of use in the package should be provided. For popular datasets, \url{paperswithcode.com/datasets} has curated licenses for some datasets. Their licensing guide can help determine the license of a dataset.
        \item For existing datasets that are re-packaged, both the original license and the license of the derived asset (if it has changed) should be provided.
        \item If this information is not available online, the authors are encouraged to reach out to the asset's creators.
    \end{itemize}

\item {\bf New assets}
    \item[] Question: Are new assets introduced in the paper well documented and is the documentation provided alongside the assets?
    \item[] Answer: \answerYes{}
    \item[] Justification: The released code implementing the methodology is documented with a README explaining the setup and reproduction steps.
    \item[] Guidelines:
    \begin{itemize}
        \item The answer \answerNA{} means that the paper does not release new assets.
        \item Researchers should communicate the details of the dataset\slash code\slash model as part of their submissions via structured templates. This includes details about training, license, limitations, etc. 
        \item The paper should discuss whether and how consent was obtained from people whose asset is used.
        \item At submission time, remember to anonymize your assets (if applicable). You can either create an anonymized URL or include an anonymized zip file.
    \end{itemize}

\item {\bf Crowdsourcing and research with human subjects}
    \item[] Question: For crowdsourcing experiments and research with human subjects, does the paper include the full text of instructions given to participants and screenshots, if applicable, as well as details about compensation (if any)?
    \item[] Answer: \answerNA{}
    \item[] Justification: No crowdsourcing or human subjects involved.
    \item[] Guidelines:
    \begin{itemize}
        \item The answer \answerNA{} means that the paper does not involve crowdsourcing nor research with human subjects.
        \item Including this information in the supplemental material is fine, but if the main contribution of the paper involves human subjects, then as much detail as possible should be included in the main paper. 
        \item According to the NeurIPS Code of Ethics, workers involved in data collection, curation, or other labor should be paid at least the minimum wage in the country of the data collector. 
    \end{itemize}

\item {\bf Institutional review board (IRB) approvals or equivalent for research with human subjects}
    \item[] Question: Does the paper describe potential risks incurred by study participants, whether such risks were disclosed to the subjects, and whether Institutional Review Board (IRB) approvals (or an equivalent approval/review based on the requirements of your country or institution) were obtained?
    \item[] Answer: \answerYes{}{}
    \item[] Justification: The analysis in Sec.~\ref{sec:experiments} was approved by our local Ethics Committee / Institutional Review Board (IRB). Full institutional details and project numbers are redacted for double-blind review and will be provided in the camera-ready version.
    \item[] Guidelines:
    \begin{itemize}
        \item The answer \answerNA{} means that the paper does not involve crowdsourcing nor research with human subjects.
        \item Depending on the country in which research is conducted, IRB approval (or equivalent) may be required for any human subjects research. If you obtained IRB approval, you should clearly state this in the paper. 
        \item We recognize that the procedures for this may vary significantly between institutions and locations, and we expect authors to adhere to the NeurIPS Code of Ethics and the guidelines for their institution. 
        \item For initial submissions, do not include any information that would break anonymity (if applicable), such as the institution conducting the review.
    \end{itemize}

\item {\bf Declaration of LLM usage}
    \item[] Question: Does the paper describe the usage of LLMs if it is an important, original, or non-standard component of the core methods in this research? Note that if the LLM is used only for writing, editing, or formatting purposes and does \emph{not} impact the core methodology, scientific rigor, or originality of the research, declaration is not required.
    \item[] Answer: \answerNA{}
    \item[] Justification: LLMs were used for text polishing, but were not used to generate the paper's core methodology or developments. Accordingly, no declaration is provided.
    \item[] Guidelines:
    \begin{itemize}
        \item The answer \answerNA{} means that the core method development in this research does not involve LLMs as any important, original, or non-standard components.
        \item Please refer to our LLM policy in the NeurIPS handbook for what should or should not be described.
    \end{itemize}

\end{enumerate}
\fi

\end{document}